\let\oldvec\vec
\documentclass{llncs}
\let\vec\oldvec
\usepackage{prooftree} 
\usepackage{graphicx}
\usepackage{MnSymbol} 

\DeclareGraphicsExtensions{.eps,.jpg}

\newcommand\LCGp{LCG$_{\mathrm{p}}$}

    \newcommand\nat{\mathbf{N}}

\newcommand\seq\vdash

\newcommand\naur{\ \big|\ } 
\newcommand\backus{\ {:}{:}{=}\ } 

\usepackage{gb4e}


\newcommand\exun{
\begin{tabular}[t]{l} 
\textsc{Example 1} 
\\[1ex] 
\begin{tabular}{rl} 
$\left.\begin{array}{rr}
11& \comp{I}
\end{array}\right\}$
& Sophie 
\end{tabular} 
\\ \\   
\begin{tabular}{ll}
$\left.\begin{array}{rr}
11&\ppari{O}\\
&\otimes \\  
00&\qari{\sss^\perp}\\  
&\otimes \\ 
12&\pari{O}\\ 
&\comp{\otimes} \\  
13&\qqari{O}
\end{array}\right\}$ 
& gave 
\end{tabular} 
\\ \\ 
\begin{tabular}{rl}
$\left.\begin{array}{rr}
13&\pari{I}\\
&\comp{\otimes} \\  
14&\qari{O}\\  
\end{array}\right\}$ 
& a 
\end{tabular} 
\\ \\ 
\begin{tabular}{ll} 
$\left.\begin{array}{rr}
14& \comp{I}
\end{array}\right\}$
& kiss 
\end{tabular} 
\\ \\   
\begin{tabular}{rl}
$\left.\begin{array}{rr}
12&\pari{I}\\
&\comp{\otimes} \\  
15&\qari{O}\\  
\end{array}\right\}$ 
& to 
\end{tabular} 
\\ \\ 
\begin{tabular}{rl} 
$\left.\begin{array}{rr}
15& \comp{I}
\end{array}\right\}$
& Christian 
\end{tabular} 
\\ \\ 
\begin{tabular}{rl}
$\left.\begin{array}{rr}
00& \comp{\sss}   \\ 
\end{array}\right\}$ 
& \textit{(sentence)}   
\end{tabular} 
\end{tabular}}

\newcommand{\exdeux}{ 
\scalebox{0.85}{
\begin{tabular}[t]{l} 
\textsc{Example 2} 
\\[1ex]  
\begin{tabular}{rl} 
$\left.\begin{array}{rr}
21& \comp{I}
\end{array}\right\}$
& Christian  
\end{tabular} 
\\ \\   
\begin{tabular}{ll}
$\left.\begin{array}{rr}
21&\ppari{O}\\
&\otimes \\  
00&\qari{\sss^\perp}\\  
&\comp{\otimes} \\ 
22 & \qari{O} 
\end{array}\right\}$ 
& gave 
\end{tabular} 
\\ \\   
\begin{tabular}{rl}
$\left.\begin{array}{rr}
23&\pari{I}\\
&\comp{\otimes} \\  
24&\qari{O}\\  
\end{array}\right\}$ 
& a 
\end{tabular} 
\\ \\ 
\begin{tabular}{rl} 
$\left.\begin{array}{rr}
24& \comp{I}
\end{array}\right\}$
& book  
\end{tabular} 
\\ \\ 
\begin{tabular}{rl}
$\left.\begin{array}{rr}
25&\pari{I}\\
&\comp{\otimes} \\  
26&\qari{O}\\  
\end{array}\right\}$ 
& to 
\end{tabular} 
\\ \\   
\begin{tabular}{rl} 
$\left.\begin{array}{rr}
26& \comp{I}
\end{array}\right\}$
& Anne  
\end{tabular} 
\\ \\    
\begin{tabular}{rl}
$\left.\begin{array}{rr}
25 &\ppari{\pari{O}}\\ 
&\otimes\\ 
23&\qari{O}\\ 
&\comp{\otimes}\\ 
22 &\qari{I}\\ 
&\otimes\\ 
27&\pari{O}\\ 
&\otimes\\ 
28&\qqari{O} 
\end{array}\right\}$
& and   
\end{tabular} 
\\ \\   
\begin{tabular}{rl}
$\left.\begin{array}{rr}
28&\pari{I}\\
&\comp{\otimes} \\  
29&\qari{O}\\  
\end{array}\right\}$ 
& a 
\end{tabular} 
\\ \\   
\begin{tabular}{rl} 
$\left.\begin{array}{rr}
29& \comp{I}
\end{array}\right\}$
& kiss 
\end{tabular} 
\\ \\   
\begin{tabular}{rl}
$\left.\begin{array}{rr}
27&\pari{I}\\
&\comp{\otimes} \\  
20&\qari{O}\\  
\end{array}\right\}$ 
& to 
\end{tabular} 
\\ \\   
\begin{tabular}{rl} 
$\left.\begin{array}{rr}
20& \comp{I}
\end{array}\right\}$
& Sophie  
\end{tabular} 
\\ \\   
\begin{tabular}{rl}
$\left.\begin{array}{rr}
00& \comp{\sss}   \\ 
\end{array}\right\}$ 
& \textit{(sentence)}   
\end{tabular} 
\end{tabular}}}

\newcommand{\extrois}{
\begin{tabular}[t]{l} 
\textsc{Example 3}\\[1ex]  
\begin{tabular}{rl} 
$\left.\begin{array}{rr}
31& \comp{I}
\end{array}\right\}$
& Sophie 
\end{tabular} 
\\ \\   
\begin{tabular}{ll}
$\left.\begin{array}{rr}
31&\ppari{O}\\
&\otimes \\  
00&\qari{\sss^\perp}\\  
&\comp{\otimes} \\ 
32&\qari{O}\\
\end{array}\right\}$ 
& liked 
\end{tabular} 
\\ \\   
\begin{tabular}{rl}
$\left.\begin{array}{rr}
32&\pari{I}\\
&\comp{\otimes} \\  
33&\qari{O}\\  
\end{array}\right\}$ 
& a 
\end{tabular} 
\\ \\   
\begin{tabular}{rl} 
$\left.\begin{array}{rr}
34& \comp{I}
\end{array}\right\}$
& book 
\end{tabular} 
\\ \\   
\begin{tabular}{rl}
$\left.\begin{array}{rr}
34&\ppari{O}\\
&\comp{\otimes}\\
33&\qari{I}\\ 
&\otimes\\  
35 &\pari{I}\\ 
&\pa\\
36&\qqari{O}\\ 
\end{array}\right\}$ 
& that 
\end{tabular} 
\\ \\   
\begin{tabular}{rl} 
$\left.\begin{array}{rr}
37& \comp{I}
\end{array}\right\}$
& Christian 
\end{tabular} 
\\ \\   
\begin{tabular}{ll}
$\left.\begin{array}{rr}
37&\ppari{O}\\
&\otimes \\  
36&\qari{I}\\  
&\comp{\otimes} \\ 
35&\qari{O}\\
\end{array}\right\}$ 
& liked 
\end{tabular} 
\\ \\   
\begin{tabular}{rl}
$\left.\begin{array}{rr}
00& \comp{\sss}   \\ 
\end{array}\right\}$ 
& \textit{(sentence)}   
\end{tabular} 
\end{tabular} }

\newcommand\calP{\mathsf{P}}

\newcommand\calTL{\mathsf{Lp}}

\newcommand\calTMLL{\mathsf{L}}

\newcommand\FI{{\calTMLL^\bullet}}
\newcommand\FO{{\calTMLL^\circ}}
\newcommand\FIH{{\calTMLL_h^\bullet}}
\newcommand\FOH{{\calTMLL_h^\circ}}
\newcommand\FL{{\calTMLL}}

\newcommand\OC{\circ}
\newcommand\IC{\bullet}

\newcommand\pt\otimes 
\newcommand\lang{\mathcal{L}}
\newcommand\calV{\mathcal{V}}
\newcommand\calU{\mathcal{U}}
\newcommand\calC{\mathcal{C}}
\newcommand\calCp{\mathcal{C}_{\otimes}}
\newcommand\calL\lang
\newcommand\lex{\mathrm{lex}}

\newcommand\vide{\emptyset} 
\newcommand\llts{\mathbin{\otimes}}

\newcommand\strictsubset{\varsubsetneq} 

\newcommand\ma[1]{``\emph{#1}''}

\newcommand\SPF{\ensuremath{\textsf{sPF}}} 

\newcommand\pari[1]{\overgroup{\compar{#1}}}
\newcommand\qari[1]{\undergroup{\compar{#1}}}
\newcommand\ppari[1]{\overgroup{\ \pari{#1}}} 
\newcommand\qqari[1]{\undergroup{\ \qari{#1}}}
\newlength{\toto}
\newlength{\tyty}
\newcommand\compl[2]{\settowidth{\toto}{$#1$}\setlength{\tyty}{#2}\addtolength{\tyty}{-\toto}\hspace*{\tyty}#1} 
\newcommand\comp[1]{\compl{#1}{2em}} 
\newcommand\compar[1]{\compl{#1}{1em}} 

\newcommand\calG{\mathcal{G}} 

\newcommand\yields\longrightarrow 
\newcommand\sss{\mathsf{s}}
\newcommand\fails\bot
\newcommand\lto{\mathbin{\backslash}}
\newcommand\lfrom{\mathbin{/}}

\newcommand\pa{\mathbin{\wp}}
\newcommand\lts{\mathbin{\otimes}}

\author{Roberto Bonato\inst{1}\thanks{I am deeply indebted to my co-author for having taken up again after so many years our early work on learnability for $k$-valued Lambek grammars, extended and coherently integrated it into the framework of learnability from proof frames.} \and Christian Retor\'e \inst{2}\thanks{Thanks to CNRS and to IRIT  for my sabbatical year, to the Loci ANR project for its intellectual and financial support, to C. Casadio, M. Moortgat for their encouragement  and to A. Foret and to the anonymous reviewers for their helpful remarks.}}
\institute{Questel SAS, Sophia Antipolis, France \and IRIT, Toulouse, France \& Univ. Bordeaux, France}

\title{Learning Lambek grammars  from proof frames}

\begin{document} 

\maketitle

\abstract{In addition to their limpid interface with semantics, 
categorial grammars 
enjoy another important property: learnability. 
This was first noticed by Buskowsky and Penn and further studied by Kanazawa, for Bar-Hillel categorial grammars. 

What about Lambek categorial grammars? 
In a previous paper we showed that product free Lambek grammars 
where learnable from structured sentences, the structures being incomplete natural deductions. 
These grammars were shown to be unlearnable from strings by Foret and Le Nir. 
In the present paper we show that Lambek grammars, possibly with product, are learnable from proof frames that are incomplete proof nets.

After a short reminder on grammatical inference \`a la Gold, 
we provide an algorithm that learns Lambek grammars with product from proof frames and we prove 
its convergence. We do so for $1$-valued also known as rigid Lambek grammars with product, since standard techniques can extend our result to $k$-valued grammars. Because of the correspondence between cut-free proof nets and normal natural deductions, 
our initial result on product free Lambek grammars can be recovered. 

We are sad to dedicate the present paper to Philippe Darondeau, with whom we started to study such questions in Rennes at the beginning of the millennium, and who passed away prematurely. 

We are glad to dedicate the present paper to Jim Lambek for his 90 birthday: he is the living proof  that research is an eternal  learning process.  
}

\section{Presentation} 

Generative grammar exhibited two characteristic properties of the syntax of 
human languages that distinguish them from other formal languages:
\begin{enumerate}
\item Sentences should be easily parsed and generated, since we speak and understand each other in real time. 
\item Any human language should be easily learnable,  preferably from not so many positive examples, as the study of first language acquisition shows. 
\end{enumerate}

\emph{Formally}, the first point did receive a lot of attention, leading to the class of mildly context sensitive languages \cite{JVW91}: 
they enjoy polynomial parsing but are rich enough to describe natural language syntax. A formal 
account of learnability was more difficult to find. Furthermore, as soon as a notion of formal learnability was proposed, the first results seemed so 
negative that the learnability criterion was left out of the design of syntactical formalisms. 
This negative result can be stated 
as follows:  any class of languages that contains all the regular languages cannot be learnt. 

It should be explained why this result was considered so negatively. 
By that time,
languages were viewed through the Chomsky hierarchy (see figure \ref{hierarchy}).  
Given that regular languages are the simplest class and 
that human languages were known to go beyond regular languages, 
it seemed that there could not exist an algorithm that learns
a large class as the one of human languages. 
This pessimistic viewpoint was erroneous for at least two reasons: 
\begin{itemize} 
\item  The class of human languages does not include all regular languages and it is likely that it does  not  even include a single regular language. The figure \ref{hierarchy} gives the present hypothesis on the class of human languages. 
\item The positive examples were thought to be sequences of words, while it has been shown long ago that grammatical rules operate on structured sentences and phrases (that are rather trees or graphs), see e.g. \cite{Berwick2011} for a recent account. 
\end{itemize} 

\begin{figure}[t]  
\label{hierarchy}
\includegraphics[scale=0.4]{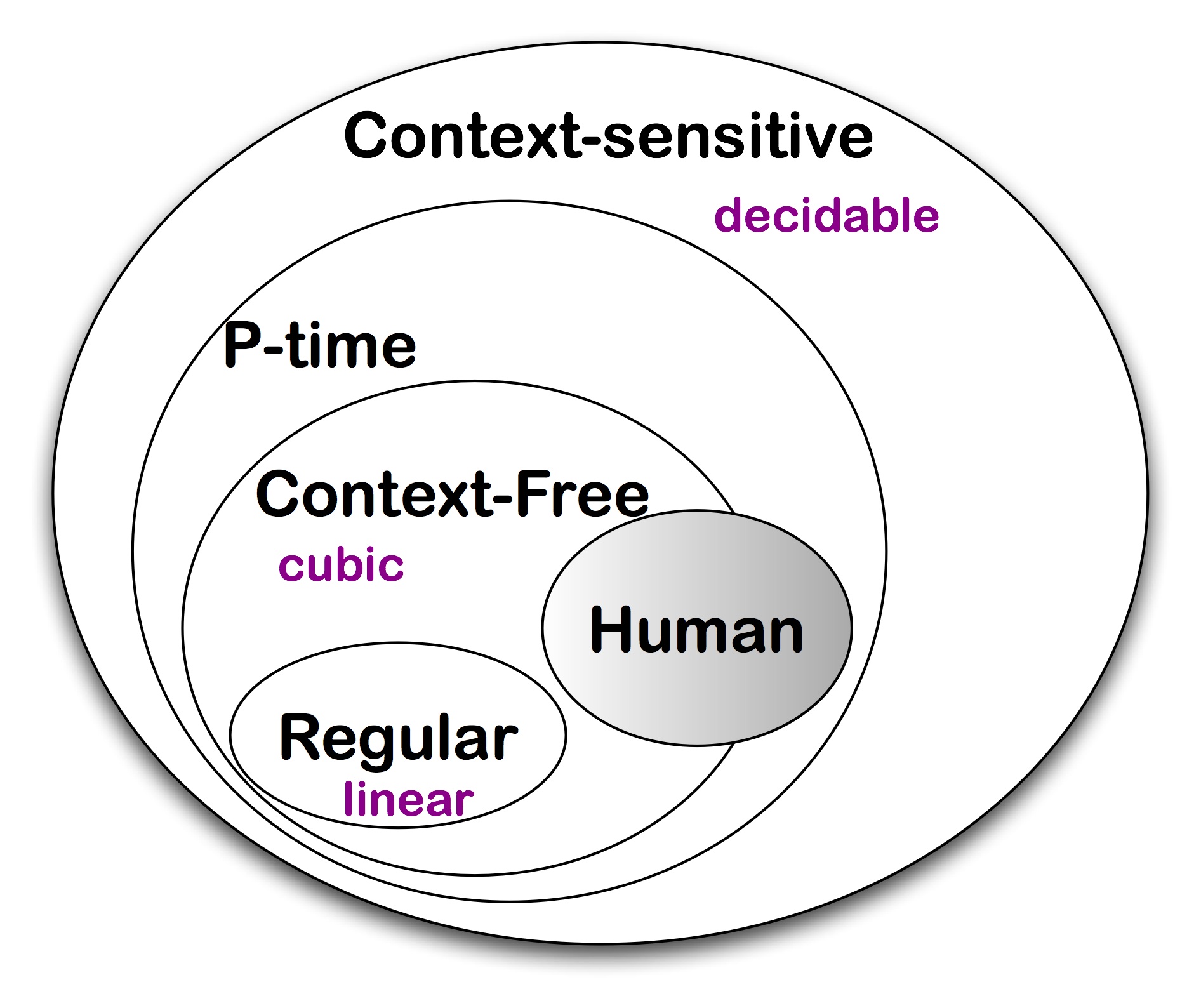}
\caption{Human languages and the classes of the Chomsky hierarchy (with parsing complexity).}
\end{figure} 

Although we shall recall it more precisely in the first section of the present paper, let us make some comments on 
Gold's notion of \emph{learning a class of languages} generated by a class of grammars  $\calG$. 
According to Gold, a learning function $\phi$ maps a sequence of sentences $e_1,\ldots,e_n$ to a grammar $G_n=\phi(e_1,\ldots,e_n)$ in the class 
$\calG$  in such a way that, when the examples $(e_i)_{i\in \nat}$ 
enumerate a  language $\calL(G)$ of a grammar $G$ in  $\calG$ i.e. 
$\calL(G)=\{e_i\ |\ i\in \nat\}$
there exists an integer $N$ such that for all $n>N$ the grammars  $G_n$ are constantly equal to $G_N$  which generates  the same language i.e. 
$\calL(G_n)=\calL(G)=\{e_i\ |\ i\in \nat\}$. The fundamental point is that the function learns a \emph{class} of languages: the algorithm eventually finds out that the enumerated language  cannot be any other language in the class. This means that 
the very same language can be learnable as a member of a learnable class of languages, and unlearnable as the member of another class of languages. Although surprising at first sight, this notion according to which one \emph{learns  in a predefined class of languages}  is rather compatible 
with our present knowledge of first language acquisition.

Overtaking the pessimistic view of Gold's theorem,  Angluin established in the 80s 
that some large but transversal classes of languages were learnable in Gold's sense. \cite{Angluin1980pattern}
Regarding categorial grammars, Buszkowski and Penn defined in late 80s \cite{BP90,Bus87b}
an algorithm that learns basic categorical grammars from structured sentences, that are called functor-argument structures, 
and Kanazawa proved in 1994 that their algorithm converges: it actually learns categorial grammar in Gold's sense. 
\cite{Kan98,Kan94}

The result in the present paper is much in the same vein as the ones by Buskowski, Penn and Kanazawa.  
\begin{description} 
\item[Section \ref{gold}] 
We first recall the Gold learning paradigm, identification in the limit from positive examples. 
\item[Sections \ref{categorial}, \ref{cgpf}] 
Next we briefly present Lambek categorial grammars with product, and  define their parsing as the construction of cut-free proof nets. We also introduce the structures we shall learn from, 
that we call \emph{proof frames}.
Indeed, Lambek grammars (with or without product) ought to be learnt from \emph{structured sentences} since Foret and Le Nir established that they cannot be learnt from strings \cite{FL02coling}. 
Informally, proof-frames are name-free parse-structures, i.e. name-free proof-nets,  just like functor-argument structures  are name-free natural deduction used to learn basic categorial grammars. 
\item [Sections \ref{uni},\ref{RG},\ref{conv}]
After a reminder on unification in relation to  categorial grammars, we present our algorithm that learns rigid Lambek categorial grammars with product from proof frames. We illustrate it on examples involving introduction rules (that are not part of basic categorial grammars) and product rules (that are not part  of product free Lambek grammars). We then prove the convergence of this algorithm. 
\item[Section \ref{nd}]
We there show that the present result strictly encompasses our initial result \cite{BR01lll} that learns rigid product-free Lambek grammars 
from name-free natural deductions. To do so, we give the bijective correspondence between \emph{cut-free} proof nets for the product-free Lambek calculus and \emph{normal} natural deduction that are commonly used as parse structures. 
\item[In the conclusion,] we discuss the merits and limits of the present work. 
We briefly explain how it can generalise to $k$-valued Lambek grammars with product and suggest direction for obtaining corpora with proof frame annotations from dependency-annotated corpora. 
\end{description}


\section{Exact learning \`a la Gold: a brief reminder} 
\label{gold} 

We shall just give a brief overview of the Gold learning model of \cite{Gold67}, 
with some comments, and explain why his 
famous unlearnability theorem of \cite{Gold67} (theorem \ref{unlearnable} below) is not as negative as it first seemed  --- as the result of \cite{Angluin1980pattern} and of the present article shows.

The principles of first language acquisition as advocated by Chomsky \cite{CP75} and more recently by Pinker
\cite{Pin95,Pin95rabbits}
can be very roughly summarised as follows: 

\begin{enumerate} 
\item \label{positive} 
One \emph{learns from positive examples only}: an argument says that in certain civilisations 
children uttering ungrammatical sentences are never corrected although they learn the grammar 
just as fast as ours 
--- this can be discussed, since the absence of reaction might be considered as negative evidence, 
as well as the absence of some sentences in the input.  
\item \label{restriction} 
The target language is reached by \emph{specialisation},  i.e.  by restricting word order from tentative languages with a freer word order:
rare are the learning algorithms for natural language that proceed by specialisation although, when starting from semantics,  there are such algorithms as  the one by Tellier \cite{Tellier2008icgi} 
\item \label{root} 
\emph{Root meaning} is learnt first, and as part of this root meaning, the argumental structure (also known as  valencies in dependency grammars) are known before the grammar learning process actually starts. 
This implies that in the learner's utterances exactly all needed words are present, possibly in a non correct order. This  enforces the idea that one learns by specialisation --- the afore mentioned work by Tellier actually uses  argument structures as inputs \cite{Tellier2008icgi} 
\item \label{POS} 
The examples that the child is exposed to are not so numerous: this is known as the \emph{Poverty Of Stimulus} argument.
It has been widely discussed since 2000 in particular for supporting  quantitative methods. \cite{CP75,PullumScholz2002,RealiChristiansen2005cogscience,Berwick2011}
\end{enumerate} 


In his seminal 1967 paper, Gold introduced a formalisation of the process of the acquisition of one's first language grammar. It strictly follows the first principle stated above, which is the easiest to formalise: the formal question he addressed could be more generally stated as \emph{grammatical inference from positive examples}. 
It also should be said that Gold's notion of learning may be used for other purposes, every time  one wants to extract some regularity out of sequences  observations. It has been applied to genomics (what would be a grammar generating the strings issued from DNA sequences) 
and diagnosis (what are the regular behaviours of the system, what would be a grammar generating the sequences of normal observations provided by captors for detecting abnormal behaviours). 

We shall provide only a minimum of information on formal languages and grammars. Let us just say that a  language is a subset of an inductive class  $\calU$. Elements of $\calU$ usually are finite sequences (a.k.a. strings) of words, or trees whose leaves are labelled by words, or graphs whose vertices are words. \footnote{We here say ``words" because they are linguistic words, while other say ``letters" or ``terminals," and we say ``sentences" for sequences of words where others say ``words" for sequences of ``letters" or ``terminals"} 
A grammar $G$ is a finitely described process generating the objects of a language $\calL(G)\subset\calU$.  
The membership question is said to be decidable for a grammar $G$ when the
characteristic function of $L(G)$ in $\calU$ is computable. 
The most standard example of $\calU$ is $\Sigma^*$ the set of finite sequences over some set of symbols (e.g. words)  $\Sigma$.
The phrase structure grammars of Chomsky-Schutzenberger are the most famous grammars producing languages that are parts of $\Sigma^*$. Lambek categorial grammars and basic categorial grammars are an alternative way to generate sentences as elements of  
$\Sigma^*$: they produce the same languages as context-free languages 
\cite[chapters 2]{BGS63,Pen93,MootRetore2012lcg}. Finite labeled trees also are a possible class of objects. For instance a regular tree grammar produces a tree language, whose yields define a context free string language. In the formal study of  human languages, $\calU$ usually consists in strings of words or in  trees with words on their leaves. 

\begin{definition}[Gold, 1967, \cite{Gold67}] \label{identification} 
A \emph{learning function} for a class of grammars $\calG$ producing $\calU$-objects  ($\lang(G)\subset\calU$) is a partial function $\phi$ that maps any finite sequence of positive examples $ex_1,ex_2,\ldots,ex_k$  with  $ex_i\in\calU$ to a grammar $G_i=\phi(ex_1,ex_2,\ldots,ex_k)$ of the class of grammars $\calG$  such that:
\begin{description} 
\item[if] $(e_i)_{i\in I}$ is any enumeration of a language $\lang(G)\subset\calU$ with $G\in\calG$,   
\item[then] there exists an integer $N$ such that:
\begin{itemize}
\item $G_P=G_N$ for all $P\geq N$. 
\item $\lang(G_N)=\lang(G)$. 
\end{itemize}
\end{description} 
\label{goldcvg} 
\end{definition}

Several interesting properties of learning functions have been considered:

\begin{definition}\label{learningprop} 
A learning function $\phi$ is said to be 
\begin{itemize} 
\item 
\emph{effective} or computable when $\phi$ is recursive. 
In this case one often speaks of a \emph{learning algorithm}. In this article, we shall only consider effective learning functions:
this is consistent both with  language being viewed as a computational process and with applications to computational linguistics. 
Observe that the learning function does not have to be  a total recursive function: it may well be undefined for some sequences of sentences and still be a learning function. 
\item 
\emph{conservative} if
$\phi(e_1,\ldots,e_p,e_{p+1})=\phi(e_1,\ldots,e_p)$ whenever $e_{p+1}\in\calL(\phi(e_1,\ldots,e_p))$. 
\item 
\emph{consistent} if $\{e_1,\ldots,e_p\}\subset\calL(\phi(e_1,\ldots,e_p))$  whenever $\phi(e_1,\ldots,e_p)$ is defined. 
\item 
\emph{set driven} if  $\phi(e_1,\ldots,e_p)=\phi(e'_1,\ldots,e'_q)$ whenever $\{e_1,\ldots,e_p\} = \{e'_1,\ldots,e'_q\}$
--- neither the order of the examples nor their repetitions matters. 
\item 
\emph{incremental} if there exists a binary function $\Psi$ such that\\ $\phi(e_1,\ldots,e_p,e_{p+1})=\Psi(\phi(e_1,\ldots,e_p),e_{p+1})$
\item 
\emph{responsive} 
if the image $\phi(e_1,\ldots,e_p)$ is defined  whenever  there exists $L$ in the class with $\{e_1,\ldots,e_p\}\subset L$ 
\item 
\emph{monotone increasing}  when $\phi(e_1,\ldots,e_p,e_{p+1})\subset\phi(e_1,\ldots,e_p)$
\end{itemize} 
\end{definition} 

The algorithm for learning Lambek grammars that we propose in this paper enjoys all those properties. All of them seem to be sensible with respect to first language acquisition but the last one: 
indeed, as said above,  children rather learn by specialisation. 

It should also be observed that the learning algorithm applies to a \emph{class of languages}. 
So it is fairly possible that a given language $L$ which both belongs to the classes $\calG_1$ 
and $\calG_2$ can be identified as a member of $\calG_1$ and not as a member of $\calG_2$. Learning $L$ in such a setting is nothing more than to be sure, given the examples seen so far, that the language is not any other language in the class. 

The classical result from the same 1967 paper by Gold \cite{Gold67} that has be over interpreted see e.g. \cite{Angluin1980,Johnson2004gold} can be stated as follows:

\begin{theorem}[Gold, 1967, \cite{Gold67}] \label{unlearnable}
If a class $\calG_r$ of grammars generates 
\begin{itemize} 
\item   languages $(L_i)_i\in\nat$ with $L_i\in\nat$ 
which are strictly embedded that is $L_i\strictsubset L_{i+1}$ for all $i\in\nat$ 
\item together with the union of all these languages $\cup_{i\in\nat} L_i\in \calG_r$ 
\end{itemize} 
then no function may learn $\calG_r$. 
\end{theorem}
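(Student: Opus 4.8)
The plan is to argue by contradiction: suppose some learning function $\phi$ --- recursive or not, total or not, since Definition~\ref{goldcvg} only asks for a partial function --- identifies $\calG_r$ in the limit, and exhibit a single enumeration on which $\phi$ provably fails. Write $L_\omega := \bigcup_{i\in\nat} L_i$ for the union, which by hypothesis also belongs to $\calG_r$. The whole argument rests on the classical observation that a successful learner can always be ``locked'': there is a finite sequence $\sigma$ of elements of $L_\omega$ with $\lang(\phi(\sigma)) = L_\omega$ and $\phi(\sigma\cdot\tau) = \phi(\sigma)$ for every finite sequence $\tau$ over $L_\omega$.

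First I would prove this locking property, itself by contradiction. If no locking sequence for $L_\omega$ existed, then from every finite sequence over $L_\omega$ one could append finitely many further elements of $L_\omega$ after which $\phi$ either is undefined, or outputs a grammar not generating $L_\omega$, or outputs a grammar different from the current one; iterating this while interleaving the appended blocks with a fixed exhaustive enumeration $e_1, e_2, \ldots$ of $L_\omega$ produces a genuine enumeration of $L_\omega$ along which $\phi$ never settles on a correct grammar --- contradicting that $\phi$ identifies $L_\omega$.

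Next I would turn $\sigma$ against one of the small languages. Since $\sigma$ is finite, only finitely many members of $L_\omega$ occur in it, so the chain condition $L_0 \strictsubset L_1 \strictsubset \cdots$ with union $L_\omega$ gives an index $k$ such that every element of $\sigma$ already lies in $L_k$. Fix an enumeration $d_1, d_2, \ldots$ of $L_k$ (nonempty, being a language in the class; the empty-language case, if admitted at all, is disposed of by the usual pause-symbol convention) and set $T := \sigma\cdot d_1\cdot d_2\cdots$. Every $d_j$ lies in $L_k \subseteq L_\omega$, so by the locking property $\phi$ is constantly $\phi(\sigma)$ along all of $T$; hence $\phi$ converges on this enumeration of $L_k$, but to a grammar generating $L_\omega$, and $L_\omega \neq L_k$ because $L_k \strictsubset L_{k+1} \subseteq L_\omega$. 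Thus $\phi$ fails to identify $L_k$, a contradiction, so no learning function for $\calG_r$ can exist.

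The one genuinely delicate step is the locking lemma of the second paragraph: it is there that the ``$\phi$ changes its mind infinitely often'' escape route is closed, and it requires some care to keep the adversary enumeration exhaustive (a real text for $L_\omega$, not merely an infinite word over it) while still inserting the blocks that perturb $\phi$. Everything else is bookkeeping; note that the presence of $L_\omega$ in the class is used exactly to obtain the locking sequence, and the strictness of the inclusions exactly to guarantee $L_k \neq L_\omega$.
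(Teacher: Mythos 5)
Your proof is correct, but it takes a genuinely different route from the paper's. The paper diagonalizes directly against the union $L=\cup_{i\in\nat} L_i$: assuming a learner $\phi$ exists, it builds a single enumeration of $L$ by feeding examples of $L_0$ until $\phi$ proposes a grammar $G_0$ generating $L_0$ (which must happen, since $\phi$ learns $L_0$ and the examples seen so far can always be continued into an enumeration of the current $L_i$), then examples of $L_1$ until $\phi$ proposes $G_1$ generating $L_1$, and so on; since $\lang(G_i)=L_i\strictsubset L_{i+1}=\lang(G_{i+1})$ forces $G_i\neq G_{i+1}$, the learner changes its mind infinitely often on an enumeration of $L$, contradicting the fact that convergence permits only finitely many mind changes. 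You instead first establish a locking-sequence lemma (the Blum--Blum argument) for $L_\omega$ and then derail the learner on a single $L_k$ containing the locking sequence: $\phi$ converges on an enumeration of $L_k$, but to a grammar for $L_\omega\neq L_k$. So the two arguments exhibit failure on different languages of the class: the paper's learner fails on the union by never converging, yours fails on some $L_k$ by converging to the wrong grammar. Both constructions require the same care in keeping the adversary enumeration exhaustive, which you flag explicitly and the paper leaves implicit. Your decomposition buys a reusable lemma and a fully rigorous treatment of the finitely-many-mind-changes step, at the price of a nested contradiction; the paper's version is shorter and closer to Gold's original sketch.
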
 

\begin{proof} From the definition, we see that 
a learning function should have guessed the grammar of a language $\lang(G)$ with $G\in\calG$ after a finite number of examples in the enumeration of $\lang(G)$. 
Consequently, for any enumeration of any language in the class, 
\begin{exe}
\ex\label{changes} 
the learning function may only change its mind finitely many times. 
\end{exe} 
Assume that is a learning function $\phi$ for the class $\calG_r$.  Since the $L_i$ are nested as stated, 
we can provide an enumeration of $L=\cup L_i$ 
according to which
we firstly see examples $x_0^1,\cdots,x_0^{p_0}$ from $L_0$ until $\phi$ proposes  $G_0$ with $\calL{G_0}=L_0$,  
then we  see examples $x_1^1,\cdots,x_1^p$ in $L_1$ until $\phi$ proposes  $G_1$ with $\calL{G_1}=L_1$, 
then we  see examples $x_2^1,\cdots,x_2^p$ in $L_2$ until $\phi$ proposes  $G_2$ with $\calL{G_2}=L_2$, etc. 
In such an enumeration of $L$ the learning function changes its mind infinitely many times, conflicting with (\ref{changes}). Thus there cannot exists a learning function for the class $\calG_r$. 
\end{proof} 

Gold's theorem above has an easy  consequence that was interpreted quite negatively: 

\begin{corollary} 
No class containing the regular languages can be learnt.
\end{corollary}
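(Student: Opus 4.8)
The plan is to obtain this statement as an immediate consequence of Theorem~\ref{unlearnable}. By that theorem, it suffices to exhibit, \emph{inside the class of regular languages}, an infinite family $(L_i)_{i\in\nat}$ that is strictly increasing, $L_i\strictsubset L_{i+1}$, and whose union $\bigcup_{i\in\nat}L_i$ is again regular. Once such a family is in hand, any class $\calG$ that contains all the regular languages contains in particular each $L_i$ together with $\bigcup_i L_i$ (more precisely, grammars generating them), so the hypotheses of Theorem~\ref{unlearnable} are met for $\calG$, and hence no learning function can learn $\calG$.

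For the construction I would work over the one-letter alphabet $\Sigma=\{a\}$ and set $L_i=\{a^j\ |\ 0\le j\le i\}$ for each $i\in\nat$. Each $L_i$ is finite, hence regular, and $a^{i+1}\in L_{i+1}\setminus L_i$ witnesses $L_i\strictsubset L_{i+1}$. It then remains only to observe that $\bigcup_{i\in\nat}L_i=\{a\}^*$, which is regular. This supplies exactly the nested family and the union required by Theorem~\ref{unlearnable}, all of them lying within the regular languages and therefore within any class containing them; applying the theorem yields the corollary.

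There is essentially no obstacle in this argument. The only point that deserves a moment's care is that Theorem~\ref{unlearnable} assumes nothing about the ambient class except that it generates the chain and its union, so one must genuinely check that both the chosen $L_i$ and $\bigcup_i L_i$ are regular --- which, for the chain above, is immediate since the union is the full language $\{a\}^*$. Note in passing that a chain of \emph{finite} languages already does the job, so the unlearnability does not rely on any exotic feature of the regular class beyond its being closed under taking this particular union.
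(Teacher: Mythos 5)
Your proof is correct and is exactly the standard argument the paper leaves implicit when it calls the corollary an ``easy consequence'' of Theorem~\ref{unlearnable}: the nested chain of finite (hence regular) languages $L_i=\{a^j \mid 0\le j\le i\}$ with regular union $\{a\}^*$ instantiates the hypotheses of the theorem inside any class containing all regular languages. The paper gives no explicit proof of its own, so there is nothing to compare beyond noting that your construction is the canonical one.
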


Indeed, by that time the Chomsky hierarchy was so present that no one thought that transverse classes could be of any interest, let alone that they could be learnable. Nowadays, it is assumed that the syntax of human languages contains no regular languages and goes a bit beyond context
free languages --- as can be seen in figure \ref{hierarchy}.  It does not seem likely that human languages contain a series  of strictly embedded 
languages as well as their unions. Hence Gold's theorem does not prevent large and interesting classes of languages, like human languages, from being learnt. For instance Angluin showed that pattern languages, a transversal class can be learnt by identification in the limit \cite{Angluin1980pattern} and she also provided a criterion for learnability base on telltale sets: 

\begin{theorem}[Angluin, 1980, \cite{Angluin1980}] 
An enumerable family of languages $L_i$ with a decidable membership problem 
is effectively learnable whenever for each $i$ there is a computable finite $T_i\subset_f L_i$ such that  if $T_i\subset L_j$ then there exists $w\in (L_j\setminus L_i)$.     
\end{theorem}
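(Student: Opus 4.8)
\medskip

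\noindent\textbf{Proof proposal.}
The plan is to exhibit an explicit recursive learning function of the ``least index consistent with its own tell-tale'' kind, and then to verify convergence, invoking the tell-tale hypothesis at exactly the point where Theorem~\ref{unlearnable} would otherwise apply. Concretely, fix once and for all an effective enumeration $L_0,L_1,L_2,\ldots$ of the family, the uniform decision procedure for membership ``$w\in L_i$'', and the uniform procedure $i\mapsto T_i$ computing the finite tell-tale sets. Given a finite sequence of examples $e_1,\ldots,e_p$, let $E=\{e_1,\ldots,e_p\}$ and let $\phi(e_1,\ldots,e_p)$ run the unbounded search for the \emph{least} index $i$ with $T_i\subseteq E$ and $E\subseteq L_i$; when and if this search halts, its answer is the conjecture. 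Each test $T_i\subseteq E$ is decidable because $T_i$ is a computable finite set, and $E\subseteq L_i$ is decidable because membership in $L_i$ is, so $\phi$ is partial recursive. It may loop (hence be undefined) at some early stages, which is harmless: as recalled in Section~\ref{gold}, a learning function need not be total. By construction the conjecture is always consistent with the sample, and $\phi$ depends only on the set $E$, so it is moreover set-driven.

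For convergence, let $(e_i)_{i\in\nat}$ enumerate a language $L=\lang(G)$ of the class, and let $n$ be the \emph{least} index with $L_n=L$; it suffices to show $\phi(e_1,\ldots,e_p)=n$ for all large enough $p$, since then $\lang(G_n)=L_n=L$. On one side, $T_n$ is finite and $T_n\subseteq L_n=L$, so there is a stage $N_0\ge n$ after which $T_n\subseteq E$ (and always $E\subseteq L$); from $N_0$ on, the index $n$ satisfies the search condition. On the other side, I must show every $i<n$ eventually fails it. Since $n$ is least, $L_i\ne L$, so two cases arise. If $L_i\not\supseteq L$, fix $w_i\in L\setminus L_i$; as soon as $w_i$ occurs in the enumeration we get $E\not\subseteq L_i$, so $i$ is ruled out from then on. If $L_i\supseteq L$, then $L\strictsubset L_i$, and applying the stated property to this $i$ with $j=n$: were $T_i\subseteq L_n$ there would be some $w\in L_n\setminus L_i$, impossible since $L_n=L\subseteq L_i$; hence $T_i\not\subseteq L$, so $T_i\not\subseteq E$ at \emph{every} stage and $i$ is ruled out always. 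Taking $N$ to be the maximum of $N_0$ and the finitely many occurrence stages of the witnesses $w_i$ (over the $i<n$ of the first kind), for every $p\ge N$ no index below $n$ satisfies the search condition whereas $n$ does, so the search halts and returns $n$: this is identification in the limit.

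The delicate point is the second case above. An index $i<n$ with $L\strictsubset L_i$ is consistent with every finite sample drawn from $L$, so consistency alone can never discard it --- precisely the configuration behind Theorem~\ref{unlearnable}. The tell-tale condition is tailored to break this tie: it forces such an over-general $L_i$ to carry a tell-tale element lying outside $L$, so that the guard ``$T_i\subseteq E$'' fails forever. The only other thing needing attention is that the stabilisation stage $N$ is genuinely finite, which rests on there being only finitely many indices below $n$ and on each $T_i$ being finite; the rest is bookkeeping.
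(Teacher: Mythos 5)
The paper does not prove this theorem: it is quoted from Angluin's 1980 article as background, with no argument supplied, so there is nothing internal to compare you against. Your proposal is the standard sufficiency half of Angluin's tell-tale characterisation --- conjecture the least index $i$ whose tell-tale is contained in the sample and whose language contains the sample --- and it is correct under the natural reading of the statement. Two small remarks. First, the condition as printed in the paper is literally vacuous (take $j=i$: $T_i\subseteq L_i$ yet $L_i\setminus L_i=\emptyset$); the intended condition restricts to $j$ with $L_j\neq L_i$, equivalently concludes $L_j\not\subsetneq L_i$. Your argument only ever instantiates $j$ at an index $n$ with $L_n\neq L_i$, so you are in effect (and correctly) using the intended version; it would be worth saying so explicitly. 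Second, you read ``computable finite $T_i$'' as ``the finite set $T_i$ is uniformly computable from $i$,'' which makes the guard $T_i\subseteq E$ decidable and keeps the learner simple; Angluin's own hypothesis only asks that $T_i$ be uniformly recursively enumerable, in which case one must approximate the guard by bounded enumeration of $T_i$, and the argument becomes slightly more delicate. Under the paper's stronger phrasing your version is fine. The convergence analysis --- index $n$ eventually and permanently satisfies the guard, each $i<n$ with $L\not\subseteq L_i$ is killed by a witness in the data, and each over-general $i<n$ is killed forever because its tell-tale cannot sit inside $L$ --- is exactly right, and your closing observation that the tell-tale clause is what defuses the nested-languages obstruction of Theorem~\ref{unlearnable} is the correct conceptual point.
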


As a proof that some interesting classes are learnable, 
we shall define particular grammars, Lambek categorial grammars with product,  and their associated structure languages,
before  proving that they can be learnt from these structures, named proof frames.

\section{Categorial grammars and the \LCGp\ class} 
\label{categorial}

Given a finite set of words $\Sigma$ and an inductively defined set of categories $\calC$ including a special category $\sss$ 
and an inductively defined  set of derivable sequents $\seq\ \subset\ (\calC^* \times \calC)$ (each of them being written $t_1,\ldots,t_n\seq t$) 
a categorial grammar $G$ is defined as map $\lex_G$ from words to finite sets of categories. An important property, as far as learnability 
is concerned,  is the maximal number of categories per word  i.e. $\max_{w\in\Sigma} |\lex_G(w)|$. When it is less than $k$, the categorial grammar $G$ 
is said to be \emph{$k$-valued}  and $1$-valued categorial grammars are said to be  \emph{rigid}. 

Some standard family of categorial grammars are:
\begin{enumerate} 
\item 
\emph{Basic categorial grammars BCG} also known as  AB grammars have their categories in 
$\calC \backus  \sss \naur  B \naur  \calC\lto \calC \naur  \calC\lfrom \calC$  and the derivable sequents are the ones that are derivable in the Lambek calculus 
with elimination rules only  $\Delta\seq A$ and $\Gamma\seq B\lfrom A$ (respectively $\Gamma\seq A\lto B$) yields  
$\Gamma,\Delta\seq B$ (respectively $\Delta,\Gamma\seq B$) 
--- in such a setting the empty sequence is naturally prohibited even without saying so. \cite{BH53} 
\item 
The original \emph{Lambek grammars}  \cite{Lam58}  also have their categories in the same inductive set 
$\calC \backus  \sss \naur  B \naur  \calC\lto \calC \naur  \calC\lfrom \calC$  and the derivable sequents are the ones that are derivable in the Lambek calculus 
without empty antecedent, i.e. with rules of  figure \ref{sequents}  except $\otimes_i$ and $\otimes_h$ --- a variant allows empty antecedents.  
\item 
\emph{Lambek grammars with product} (\LCGp)  have their categories in 
$\calCp \backus  \sss \naur  B \naur  \calCp\lto \calCp \naur  \calCp\lfrom \calCp  \naur  \calCp\otimes\calCp$  and the derivable sequents are the ones that are derivable in the Lambek calculus with product 
without empty antecedents with all the rules of figure \ref{sequents} --- a variant allows empty antecedents.  
\end{enumerate}

A phrase, that is a sequence of words $w_1 \cdots w_n$, is said to be of category $C$ according to  $G$ when, for every $i$ between $1$ and $p$ there exists $t_i\in\lex_G(w_i)$ such that $t_1,\ldots,t_n\seq C$ is a derivable sequent. When $C$ is $\sss$ the phrase is said to be a \emph{sentence} according to $G$. The string language generated by a categorial grammar is the subset of $\Sigma^*$ consisting in strings that are of category $\sss$ i.e. sentences. 
Any language generated by a grammar in one of the aforementioned classes of categorial grammars is context free. 


In this paper we focus on Lambek grammars with product (\LCGp). The explicit use of product categories in Lambek grammars is not so common. Indeed, one rather sees negative products that can be treated with categorial implication $\lto$ and $\lfrom$. 
For instance a category like $(a\otimes b)\lto c$ can be viewed as $b\lto (a\lto c)$ so they do not really involve a product. 
The comma in the left-hand side of the sequent, 
corresponding to the blanks between words are implicit products, but grammar and parsing of those can be defined without explicitly using the product. 
Nevertheless, there are cases  when the product is appreciated: 

\begin{itemize}
\item 
For analysing the French Treebank, Moot in \cite{moot10semi} assigns the category $((np\otimes pp)\lto (np\otimes pp))\lfrom(np\otimes pp)$  
to \ma{et} (\ma{and})  for sentences like: 

\begin{exe}
\ex \label{exetmoot} 
Jean donne un livre \`a Marie et une fleur \`a Anne.
\end{exe} 

\item 
According to Glyn Morrill \cite{Mor98,Morrill2011cg} past participles like \ma{raced} should be assigned the category 
$ ((CN\lto CN)/(N\lto  (N\lto \sss{-}))\otimes (N\lto (N\lto \sss{-}))$ where $\sss{-}$ is an untensed sentence 
in  sentences like: 
\begin{exe}\label{expastpartmorrill}
\ex 
The horse raced past the barn fell.  
\end{exe} 
\end{itemize} 

\begin{figure}[t] 
\label{sequents} 
The derivable sequents of the Lambek syntactic calculus with product are obtained from the axiom $C\seq C$ for any category $C$   and the rules are given below, where $A,B$ are categories and $\Gamma,\Delta$ finite sequences of categories: 
\begin{center}
\begin{tabular}{ccc} 
\begin{prooftree} 
\Gamma,B,\Gamma'\seq C
\quad 
\Delta\seq A
\justifies 
\Gamma,\Delta,A\lto B,\Gamma'\seq C
\using \lto_{h}
\end{prooftree} 
&\quad\quad\quad\quad&
\begin{prooftree} 
A,\Gamma\seq C
\justifies 
\Gamma \seq A\lto C
\using \lto_i\quad \Gamma\neq\vide
\end{prooftree} 
\\ &&  \\   &&  \\ 
\begin{prooftree} 
\Gamma,B,\Gamma'\seq C
\quad 
\Delta\seq A
\justifies 
\Gamma,B\lfrom A,\Delta,\Gamma'\seq C
\using \lfrom_{h}
\end{prooftree} 
&&
\begin{prooftree} 
\Gamma,A \seq C
\justifies 
\Gamma \seq  C\lfrom A
\using \lfrom_i\quad \Gamma\neq\vide
\end{prooftree} 
\\ &&  \\   &&   \\  
\begin{prooftree} 
\Gamma,A,B,\Gamma' \seq C
\justifies 
\Gamma,A\pt B,\Gamma' \seq C
\using \pt_{h}
\end{prooftree} 
&&
\begin{prooftree} 
\Delta \seq A
\quad 
\Gamma \seq B
\justifies 
\Delta,\Gamma\seq A\pt B
\using \pt_i
\end{prooftree} 
 \\ && 
\end{tabular} 
\end{center}
\caption{Sequent calculus rule for the Lambek calculus}
\end{figure}

\section{Categorial grammars generating proof frames} 
\label{cgpf} 

The classes of languages that we wish to learn include some proper context free languages \cite{BGS63}, hence they might be difficult to learn. 
So we shall learn them from \emph{structured sentences}, and this section introduces \emph{proof frames}  that we shall use as structured sentences.

A neat  natural deduction system for Lambek calculus with product is rather intricate \cite{AR07cie,Amblard07phd}, 
mainly because the product elimination rules have to be carefully commuted for having a unique normal form. 
Cut-free sequent calculus proofs neither are fully satisfactory because they are quite redundant and some of their rules can  be swapped. 
As explained in \cite[chapter 6]{MootRetore2012lcg} proof nets provide perfect parse structures for Lambek grammars even when they use the product --- when the product is not used, cut-free proof nets and normal natural deduction are isomorphic, as we shall show in subsection \ref{pn&nd}. 
Consequently the structures that we used for learning will be proof frames that are proof nets with missing informations. 
Let us see how categorial grammars generate such structures, and first let us recall the correspondence between polarised formulae of linear logic and Lambek categories.  

\subsection{Polarised linear formulae and Lambek categories} 
\label{formulae} 

A Lambek grammar is better described with the usual Lambek categories, while proof nets are better described with linear logic formulae.
Hence we need to recall the correspondence between these two languages as done in \cite[chapter 6]{MootRetore2012lcg}. Lambek categories (with product) are 
$\calCp$ 
defined in the previous section \ref{categorial}.  Linear formula $\FL$ are defined by: 

$$
\begin{array}{lcccccccc}
\FL & \backus  &\calP & \naur  & \calP^\perp & \naur  & (\FL \llts \FL) & \naur  & (\FL\pa\FL) \\
\end{array} 
$$ 

The negation of linear logic $\_)^\perp$ is only used on propositional variables from $P$ as the De Morgan laws allow: 

$\hfill (A^\perp)^\perp\equiv A\hfill \hfill (A\pa B)^\perp\equiv (B^\perp\otimes A^\perp)\hfill \hfill (A\otimes B)^\perp\equiv (B^\perp\pa A^\perp)\hfill $ 

 To translate Lambek categories into linear logic formulae, one has to distinguish the polarised formulae, which are either  outputs or positive formulae of $\FO$ or  inputs or negative formulae of $\FI$ 
 with $F\in\FO \iff F^\perp\in\FI$ and $(\FO\cup\FI)\strictsubset\FL$: 
 
$$\left\{
\begin{array}{lcccccccc}
\FO & \backus  &\calP & \naur  & (\FO \llts \FO) & \naur  & (\FI\pa\FO) & \naur  & (\FO\pa \FI)\\
\FI & \backus  & \calP^\perp & \naur  & (\FI \pa \FI) & \naur  & (\FO\llts\FI) & \naur  & 
(\FI\llts \FO) \\
\end{array}\right.
$$ 

 Any formulae of the Lambek $L$ calculus can be translated as an output formula $+L$ of multiplicative linear logic while its negation can be translated as an input linear logic formulae $-L$: 

$$
\begin{array}{|c||c|c|c|c|}
\hline
  L & \alpha\in\calP &  L=M\lts N &L=M\lto N &  L=N\lfrom M \\ \hline \hline
  +L & \alpha &  +M\llts +N & -M\pa +N & +N\pa -M \\ \hline
  -L & \alpha^\perp &  -N\pa -M & -N\llts +M & +M\llts -N \\ \hline
\end{array}
$$

Conversely any output formula of linear logic is the translation of a Lambek formula and any input formula of linear logic is the negation of the translation of a Lambek formula. Let  $(\ldots)^\OC_\calTL$ denotes the inverse bijection of \ma{$+$}, from $\FO$
to
$\calTL$  and 
$(\ldots)^\IC_\calTL$ denotes the inverse bijection of \ma{$-$} from $\FI$
to  $\calTL$.
These two maps are inductively defined as follows:

$$
\begin{array}{|l||c|c|c|c|} \hline
F{\in}\FO 	&\alpha{\in}\calP 	& (G{\in}\FO) \llts (H{\in}\FO) &
(G{\in}\FI)\pa(H{\in}\FO) & (G{\in}\FO)\pa (H{\in}\FI)\\ \hline
F^\OC_\calTL 	&\alpha 	& G^\OC_\calTL \llts H^\OC_\calTL & 
G^\IC_\calTL\lto H^\OC_\calTL & G^\OC_\calTL
\lfrom H^\IC_\calTL\\ \hline
\hline
F{\in}\FI & \alpha^\perp {\in} \calP^\perp & (G{\in} \FI) \pa (H{\in}\FI) &
(G{\in}\FO)\llts (H{\in} \FI) & (G{\in} \FI)\llts (H{\in} \FO)\\ \hline
F^\IC_\calTL & \alpha 			& H^\IC_\calTL \llts 
G^\IC_\calTL & H^\IC_\calTL \lfrom G^\OC_\calTL
& H^\OC_\calTL\lto G^\IC_\calTL\\
\hline
\end{array}
$$

\subsection{Proof nets} 

A proof net is a graphical representation of a proof which identifies essentially similar proofs. A cut-free proof net has several conclusions, and it consists of 
\begin{itemize} 
\item 
 the subformula trees of its conclusions, that possibly stops on a sub formula which is not necessarily a propositional variable (axioms involving complex formulae simplify the learning process). 
 \item 
 a cyclic order on these sub formula trees 
 \item 
axioms that links two dual leaves $F$ and $F^\perp$ of these formula subtrees. 
\end{itemize} 

Such a structure can be represented by a sequence of terms --- admittedly easier to typeset  than a graph ---  with indices for axioms. 
Each index appears exactly twice, once on a formula $F$ (not necessarily a propositional variable) and one on $F^\perp$. 
Here are two proof nets with the same conclusions: 

\begin{exe} 
\ex \label{pneolso1} 
${\sss^\perp}^1\lts(\sss^2\pa {np^\perp}^3), np^3\lts ({\sss^\perp} \lts np)^7, ({np^\perp} \pa \sss)^7 \lts {\sss^\perp}^2, \sss^1$
\ex \label{pneolso2} 
${\sss^\perp}^1\lts(\sss^2\pa {np^\perp}^3), np^3\lts ({\sss^\perp}^4 \lts np^5), ({np^\perp}^5 \pa \sss^4) \lts {\sss^\perp}^2, \sss^1$
\end{exe}

The second  one is obtained from the first one by expansing the complex axiom $({\sss^\perp} \lts np)^7, ({np^\perp} \pa \sss)^7$ into two axioms: $({\sss^\perp}^4 \lts np^5), ({np^\perp}^5 \pa \sss^4)$. Complex axioms always can be expansed into atomic axioms --- this is known as $\eta$-expansion property. This is the reason why proof nets are often presented with atomic axioms. Nevertheless, we shall substitute propositional variables with complex formula during the learning process, and therefore we need to consider complex axioms as well --- see the processing of the example \ref{ex2RG} in section \ref{RG}.

No any such structure does correspond to a proof: 
\begin{definition}\label{pncond} 
A proof structure  with conclusions $C^1,I^1_1,\ldots,I^1_n$ is said to be a proof net of the Lambek calculus when it  enjoys the correctness criterion defined by the following properties: 
\begin{enumerate} 
\item \emph{Acyclic:} any cycle contains the two branches of a $\pa$ link  \label{acyclic} 
\item \emph{Intuitionistic:} exactly one conclusion is an output formula of $\FO$, all other conclusions are input formulae of $\FI$ \label{intui} 
\item \emph{Non commutative:} no two axioms cross each other \label{planarity} 
\item \emph{Without empty antecedent:} there is no sub proof net with a  single conclusion \label{emptysequence} 
\end{enumerate} 
\end{definition} 
The first point in this definition is not stated precisely but, given that we learn from correct structured sentences, we shall not need a precise definition. 
The reader interested in the details is referred to \cite[chapter 6]{MootRetore2012lcg}. 
Some articles on proof nets add to the criterion above a form of connectedness but it is not actually needed since this connectedness  is a consequence of the first two points see \cite{Guerrini2011} or \cite[section 6.4.8 pages 225--227]{MootRetore2012lcg}. 

\begin{definition}\label{pninduct}
Proof nets for the Lambek calculus can be defined inductively as follows (observe that they contain exactly one output conclusion): 
\begin{itemize} 
\item 
given an output formula $F$ an axiom $F, F^\perp$  is a proof net with two conclusions $F$ and $F^\perp$ --- we do no require that $F$ is a propositional variable. 
\item  
given a proof net $\pi^1$ with conclusions $O^1,I^1_1,\ldots,I^1_n$ 
and a proof net $\pi^2$ with output formula $O^2,I^2_1,\ldots,I^2_p$ 
where $O^1$ and $O^2$ are the output conclusions, one can add a $\lts$-link 
between a conclusion of one and a conclusion of the other, at least one of the two being an output conclusion. 
 We thus can obtain a proof net whose conclusions are: 
 \begin{itemize} 
 \item 
$O^1\lts I^2_k, I^2_{k+1},\ldots,I^2_{p},O^2,I^2_1,I^2_{k-1},I^1_1,\ldots,I^1_n$ --- $O^2$ being the output conclusion
\item 
$I^1_l\lts O^2, I^2_1,\ldots,I^2_p,I^1_{l+1},\ldots,I^1_{n},O^1,I^1_1,\ldots, I^1_{l-1},$ --- $O^1$ being the output conclusion
\item 
$O^1\lts O^2, I^2_1,\ldots,I^2_p, I^1_1,\ldots,I^1_n$ --- $O^1\lts O^2$ being the output conclusion. 
\end{itemize} 
\item 
given a proof net $\pi^1$ with conclusions $O^1,I^1_1,\ldots,I^1_n$  one can add a $\pa$ link between any two consecutive conclusions, thus obtaining a proof nets with conclusions:
\begin{itemize} 
\item 
$O^1,I^1_1,\ldots,I_i\pa I_{i+1},\ldots,I^1_n$ --- $O^1$ being the output conclusion
\item 
$O^1\pa I^1_1,I^1_2\ldots,I^1_n$ --- $O^1\pa I^1_1$ being the output conclusion
\item 
$I^1_n\pa O^1,I^1_1\ldots,I^1_{n-1}$ --- $O^1\pa I^1_1$ being the output conclusion
\end{itemize} 
\end{itemize} 
\end{definition}

A key result see e.g. \cite[Theorem 6.28]{MootRetore2012lcg}. 
 is that:  

\begin{theorem} 
The inductively defined proof nets of definition \ref{pninduct}, i.e. proofs,  exactly correspond 
to the proof nets defined as graphs enjoying the universal properties of the criterion \ref{pncond}
\end{theorem}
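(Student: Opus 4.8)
The plan is to prove the two inclusions separately: the easy one, that every inductively built net satisfies the correctness criterion, by induction on Definition~\ref{pninduct}; and the hard one, \emph{sequentialization}, that every proof structure satisfying the criterion of Definition~\ref{pncond} can be decomposed according to Definition~\ref{pninduct}, by induction on the number of links.

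\textbf{Soundness.} First I would check that every net built by Definition~\ref{pninduct} meets conditions \ref{acyclic}--\ref{emptysequence}. The axiom $F,F^\perp$ with $F\in\FO$ trivially has exactly one output conclusion, no cycle, no crossing axioms, and two conclusions, hence no single-conclusion subnet. For the $\lts$-step and the $\pa$-step each condition is preserved: acyclicity and the intuitionistic condition (exactly one output conclusion) are the usual multiplicative bookkeeping, already apparent from the polarity grammar of $\FO$ and $\FI$; non-crossing holds because the added link always joins two conclusions that are consecutive in the cyclic order; and the ``without empty antecedent'' clause is preserved because gluing two nets each with at least two conclusions, or adding a binary link, never produces a new one-conclusion subnet. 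This direction is routine.

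\textbf{Sequentialization.} For the converse, let $\pi$ be a proof structure satisfying the criterion. If $\pi$ is a single axiom we are done. If some conclusion of $\pi$ is a $\pa$-link $A\pa B$, I would erase that link, exposing $A$ and $B$ as consecutive conclusions; the residual structure still satisfies all four conditions — removing a $\pa$ can only kill cycles, the polarities still give one output conclusion, planarity is untouched, and a one-conclusion subnet of the residue would together with the erased link already have been a one-conclusion subnet of $\pi$ by the polarity constraints — so the induction hypothesis applies and re-adding the $\pa$-link is exactly the last clause of Definition~\ref{pninduct}. If, on the other hand, every terminal link of $\pi$ is a $\lts$-link, I would invoke a \emph{splitting lemma}: among these terminal tensor links at least one, say with conclusion $A\lts B$, is splitting, i.e. erasing it disconnects $\pi$ into two substructures, one with conclusion $A$ and one with conclusion $B$, each again satisfying the criterion. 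Given such a link, the induction hypothesis applies to both pieces, and reassembling them with the tensor link is precisely the $\lts$-step of Definition~\ref{pninduct}; one only checks that the polarities of the two pieces match one of the three shapes listed there, which is forced by the intuitionistic condition (exactly one output conclusion in $\pi$), and that planarity makes the two pieces contiguous in the cyclic order, so the step applies literally.

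\textbf{Main obstacle.} The crux is the splitting lemma. I would follow the empire / Danos--Regnier method: for each terminal $\lts$-link with premises $A,B$ consider the empires of $A$ and of $B$ (the largest sub-proof-structures with those conclusions), and show, using acyclicity together with connectedness of all switchings (which, as the excerpt notes, follows from conditions \ref{acyclic}--\ref{intui}), that if no terminal tensor were splitting one could exhibit a switching with a cycle or a disconnected switching, a contradiction. The non-commutative refinement (no two axioms cross) only restricts which switchings and empires are admissible, making the combinatorics more rigid rather than essentially harder, and in fact planarity is what guarantees the two empires of a splitting tensor are adjacent in the cyclic order. Throughout, the ``no empty antecedent'' condition \ref{emptysequence} must be threaded through each case to ensure the produced sub-nets are genuine Lambek proof nets and not multiplicative proof nets with a vacuous premise. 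Since the precise statement of acyclicity and the full switching machinery are deferred in this paper to \cite[chapter 6]{MootRetore2012lcg}, I would state the splitting lemma and cite that reference for its proof rather than reproving it here, which is consistent with the remark already made after Definition~\ref{pncond}.
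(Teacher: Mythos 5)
The paper offers no proof of this theorem at all: it is stated as a known result with a pointer to \cite[Theorem 6.28]{MootRetore2012lcg}, the text crediting Roorda \cite{Roo91} for soundness and \cite{Ret96tal} for the converse (sequentialization). Your outline --- soundness by induction on Definition~\ref{pninduct}, sequentialization by removing terminal $\pa$-links and invoking a splitting-tensor lemma established via empires and the Danos--Regnier switching argument, with the non-commutative and no-empty-antecedent conditions threaded through --- is exactly the approach of those cited sources, and deferring the splitting lemma to the reference matches the paper's own level of detail, so this is acceptable. One minor caveat: your claim in the soundness part that adding a binary link ``never produces a new one-conclusion subnet'' fails when a $\pa$-link joins the only two conclusions of a net (yielding a single-conclusion net, i.e.\ an empty antecedent); a side condition excluding this case is needed there, though the same omission is already present in Definition~\ref{pninduct} as stated in the paper.
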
 

A parse structure for a sentence $w^1,\ldots,w^p$ 
generated by a Lambek grammar $G$ defined by a lexicon $\lex_G$ 
is a proof net with conclusions 
$(c^n)^-,\ldots,(c^1)^-,\sss^+$ with $c^i\in \lex(w^i)$.
This replaces the definition of parse structure as normal natural deductions \cite{Tie99phd} which does not work well when the product is used \cite{AR07cie,Amblard07phd}.

Non-commutative proof-nets were first introduced by Roorda in his thesis 
\cite{Roo91}
 but he only showed the soundness of the criterion: the inductively defined proof nets enjoys a correctness criterion, namely the Danos Regnier criterion \cite{DR89} augmented with intuitionistic and non-commutative conditions. We provided the first proof of the converse in 
\cite{Ret96tal}. There are several equivalent notions of proof nets for cyclic multiplicative  intuitionistic linear logic that is  Lambek calculus. For instance the proof nets for non commutative linear logic by Abrusci and Ruet \cite{AR99}, restricted to the non commutative multiplicative connectives and to intuitionistic formulae also are a presentation of proof nets for Lambek calculus. It should be underlined, however that all these notions of proof nets do not properly handle cuts except the one by Melli\`es 
\cite{Mellies2004ribbon} --- see e.g. the discussions in 
\cite[chapters 11 and 18]{Girard2011blindspot}. 

\subsection{Structured sentences to learn from:  $\sss$ proof frames} 

An $\sss$ proof frame (\SPF) is simple a parse structure of a Lambek grammar i.e. a proof net whose formula names have been erased, except the $\sss$ on the output conclusion. Regarding axioms,  their positive and negative tips are also kept. 
Such a structure is the analogous of a functor argument structure for AB grammars or of a name free normal natural deduction for Lambek grammars used 
in \cite{BP90,Bus87b,BR01lll} 
and it can be defined inductively as we did in \ref{pninduct}, or by the conditions in definition \ref{pncond}.

\begin{definition}[$\sss$ proof frames, \SPF s]\label{frame} An \emph{\emph{$\sss$ proof frame (\SPF)}} is a normal proof net $\pi$ such that:
\begin{itemize}
\item 
The output of $\pi$ is labelled with the propositional constant $\sss$ --- which is necessarily the conclusion of an axiom, the input conclusion of this axiom being labelled $\sss^\perp$. 
\item 
The output conclusion of any other axiom in  $\pi$ is $O$ its input conclusion being $O^\perp=I$. 
\end{itemize} 

Given an $\sss$ proof net $\pi$ its associated $\sss$ proof frame $\pi_f$
is obtained by replacing in $\pi$ the output of any axiom by $O$ (and its dual by $I=O^\perp$) 
except the $\sss$ that is the output of $\pi$ itself which is left unchanged. 

A given Lambek grammar \emph{$G$ is said to generate an \SPF\ $\rho$} 
whenever  there exists  a proof net $\pi$ generated by $G$ such that $\rho=\pi^{IO}$. 
In such a case we write $\rho\in \SPF(G)$.  
\end{definition}

The \SPF s associated with the two proof nets \ref{pneolso1} and \ref{pneolso2} above are: 
\begin{exe}
\ex ${\sss^\perp}^1\lts(O^2\pa {I}^3), O^3\lts ({I}^4 \lts O^5), ({O}^5 \pa O^4) \lts {I}^2, \sss$
\ex ${\sss^\perp}^1\lts(O^2\pa {I}^3), O^3\lts I^7, O^7 \lts {I}^2, \sss$
\end{exe} 

Are proof frames related to partial proof nets or modules that first appeared in 
\cite{Gir86a}? They only \emph{partly} are!  Indeed, modules  are name-free parts of proof-nets and generalised such structures that can be combined to obtained real multiplicatives proof nets. Here we only consider name-free \emph{complete} proof nets, and as far as proof net surgery is concerned, we only replace some axioms with their $\eta$-expansion as explained in  
\cite[chapter 11]{Girard2011blindspot}.



\section{Unification, proof frames and  categorial grammars}
\label{uni} 

In this section, we shall briefly describe how categories, 
and more generally categorial grammars can be unified. 
Indeed, our learning algorithm makes a crucial use of category-unification,
and this kind of technique is quite common in grammatical inference \cite{JN99}. 

As said in paragraph \ref{categorial},  a categorial grammar is defined from a lexicon that maps every word $w$  to a finite set of categories $\lex_G(w)$. 
Categories are usually defined from a finite set $B$ of 
base categories that includes a special base category $\sss$.
Here we shall consider simultaneously many different categorial grammars 
since the learning hypothesis varies in the class of categorial grammars. 
In order to have a common set of base type for all categorial grammars, 
we shall consider an infinite set of base categories  $B$ whose members will be $\sss$ and infinitely many category variables 
denoted by $x$, $y$,
$x_1$, $x_2$, $\ldots$, $y_1$, $y_2$, $\ldots$ In other words,  
$B=\{\sss\}\cup V$, $\sss\not\in V$, $V$ being an infinite set of category variables. 
The categories arising from $B$ are defined as usual by 
$\calV \backus  \sss \naur  V \naur  \calV\lto\calV \naur  \calV\lfrom\calV \naur  \calV\otimes\calV$. 
This infinite set of base categories does not really modify categorial grammars: a given categorial grammar only makes use of finitely many base categories. Indeed, there are finitely many words each of them being associated with finitely many categories:  there are finitely many symbols in the lexicon hence only a finite number of base categories are used by a given categorial grammar. 
Choosing an infinite set of base categories  is rather important, as we shall substitute a category variable with a complex category using fresh variables, thus turning a categorial grammar into another one, and considering families of grammars over the same base categories.  

A \emph{substitution} $\sigma$ is a function from categories $\calV$ to categories $\calV$ that is generated by a mapping $\sigma_V$ of finitely many variables $x_{i_1},\cdots, x_{i_p}$ in $V$ to categories of $\calV$: 

$$
\begin{array}{rcl}
\sigma(\sss) & = &\sss \\
\mbox{given\ } x\in V,\quad  \sigma(x) & = & \left\{\begin{array}{ll} 
                           \sigma_V(x) & \mbox{if $x=x_{i_k}$ for some $k$} \\
                            x & \mbox{otherwise}
                           \end{array} \right. \\
\sigma(A\lto B) & = & \sigma(A)\lto\sigma(B) \\
\sigma(B\lfrom A) & = &\sigma(B)\lfrom\sigma(A)
\end{array}
$$ 

The substitution $\sigma$ is said to be a \emph{renaming} when $\sigma_V$ 
is a bijective mapping from $V$ to $V$ 
--- otherwise stated $\sigma_V$ is a permutation of the  $x_{i_1},\cdots, x_{i_p}$). 

As usual, substitutions may be extended to sets of categories 
by stipulating $\sigma(A)=\{\sigma(a)|a\in A\}$. Observe that $\sigma(A)$ can be a singleton while $A$ is not: 
$\{(a\lfrom (b\lto c)),(a\lfrom u)\}[u\mapsto(b\lto c]=\{a\lfrom (b\lto c)\}$. 
A substitution can also be applied to a categorial grammar: $\sigma(G)=G'$ with $\lex_{G'}(w)=\sigma(\lex_G(w))$
for any word $w$. Observe that a substation turns a $k$-valued (as defined in section  \ref{categorial}) categorial grammar into a $k'$-valued categorial grammar with $k'\leq k$, and possibly into a rigid (or $1$-valued) categorial grammar . 

A substitution $\sigma$ on Lambek categories (defined by  mapping finitely many category variables $x_i$ to Lambek categories $L_i$, $x_i\mapsto L_i$) 
clearly defines a substitution on linear formulae $\sigma^\ell$ (by $x_i\mapsto L_i^+$), which preserves the polarities $\sigma^\ell(F)$ is positive(respectively negative) if and only if $F$ is.  
Conversely, a substitution $\rho$ 
on linear formulae defined by mapping variables to positive linear formulae ($x_i\mapsto F_i$) defines a substitution on Lambek categories $\rho^L$ 
with the mapping $x_i\mapsto F^\OC_\calTL$.  One has: $\sigma(L)=(\sigma^\ell(L+))^\OC_\calTL$ and $\rho(F)=(\rho^L(F^\OC_\calTL))+$ if $F\in\FO$ 
and $\rho(F)=(\rho^L(F^\IC_\calTL))-$. Roughly speaking as far as we use only polarised linear formulae and substitution that preserve polarities, it does not make any difference to perform substitutions on linear formulae or on Lambek categories.

Substitution preserving polarities (or Lambek substitutions) 
can also be applied to proof nets: $\sigma(\pi)$ is obtained from $\pi$ by applying the substitution to any formula in $\pi$. 
A substitution turns an $\sss$ Lambek proof net into an $\sss$ Lambek proof net -- this is the reason why proof nets in this paper may contain axioms on complex formulae. 

\begin{proposition}\label{proofsubst} 
If \begin{itemize}
\item $\sigma$ is a substitution preserving polarities and 
\item $\pi$ a proof net generated by a Lambek grammar $G$,
\end{itemize} 
then
\begin{itemize} 
\item 
 $\sigma(\pi)$ is generated by $\sigma(G)$ and 
 \item 
 $\pi$ and $\sigma(\pi)$ have the same associated $\sss$ proof frame: $\sigma(\pi)_f=\pi_f$
 \end{itemize} 
\end{proposition}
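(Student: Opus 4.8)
The plan is to prove both bullet points by a straightforward induction on the inductive construction of proof nets given in Definition~\ref{pninduct}, exploiting that a polarity-preserving substitution commutes with each of the three net-building operations (axiom, $\llts$-link, $\pa$-link).

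First I would address the first bullet, that $\sigma(\pi)$ is generated by $\sigma(G)$. The base case is an axiom $F,F^\perp$ with $F$ an output formula; such an axiom is a leaf of the subformula tree of some category $c\in\lex_G(w)$ assigned to a word $w$, and applying $\sigma$ to it yields an axiom on $\sigma^\ell(F)$, which is still an output formula because $\sigma$ preserves polarities, sitting inside the subformula tree of $\sigma(c)\in\sigma(\lex_G(w))=\lex_{\sigma(G)}(w)$. For the induction step, if $\pi$ is obtained from $\pi^1,\pi^2$ by a $\llts$-link (resp.\ from $\pi^1$ by a $\pa$-link), then by the induction hypothesis $\sigma(\pi^1),\sigma(\pi^2)$ are generated by $\sigma(G)$; since $\sigma$ commutes with the formation of $\llts$ and $\pa$ and preserves which conclusion is the output one (polarity preservation again, plus the fact that the linking conditions in Definition~\ref{pninduct} are phrased purely in terms of polarities and the cyclic order, both untouched by $\sigma$), the very same link applied to $\sigma(\pi^1),\sigma(\pi^2)$ produces exactly $\sigma(\pi)$, and the cyclic order on conclusions is preserved because $\sigma$ acts pointwise on conclusions without reordering them. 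Hence $\sigma(\pi)$ is a proof net generated by $\sigma(G)$. (Alternatively one could invoke the correctness criterion of Definition~\ref{pncond} directly: acyclicity, the intuitionistic condition, non-crossing and non-empty-antecedent are all preserved by a pointwise polarity-preserving relabelling, so $\sigma(\pi)$ satisfies the criterion and by the theorem following Definition~\ref{pninduct} is a genuine proof net; but the inductive argument is cleaner for tracking the lexicon.)

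Next I would handle the second bullet, $\sigma(\pi)_f=\pi_f$. Recall from Definition~\ref{frame} that forming the $\sss$ proof frame of an $\sss$ proof net replaces the output formula of every axiom by the anonymous label $O$ (and its dual by $I$), leaving only the distinguished $\sss$ on the output conclusion and the polarity tips of axioms. Now $\sigma$ acts only on formula names and leaves the underlying graph, the cyclic order, the axiom links, and the polarity tips all intact; moreover $\sigma(\sss)=\sss$ so the distinguished output conclusion is unchanged. Therefore erasing all axiom formula names from $\pi$ and erasing all axiom formula names from $\sigma(\pi)$ yield literally the same labelled structure, i.e.\ $\sigma(\pi)_f=\pi_f$. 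This step is essentially immediate once one observes that the frame-forming operation factors through the forgetful map that discards exactly the information $\sigma$ might alter.

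The main obstacle, such as it is, is bookkeeping rather than mathematics: one must check carefully that in each clause of Definition~\ref{pninduct} the choice of ``which conclusion is the output conclusion'' is dictated solely by polarities and the combinatorics of the link, so that $\sigma$'s polarity preservation (established in Section~\ref{uni}, via the identity $\sigma(L)=(\sigma^\ell(L^+))^\OC_\calTL$ and the remark that $\sigma^\ell(F)$ has the same polarity as $F$) genuinely guarantees the step applies verbatim after substitution. Once that is granted, both conclusions follow by the induction, and no $\eta$-expansion or normalisation subtleties arise because $\sigma$ may legitimately introduce complex axioms, which the paper has already allowed. I would therefore present the argument as a short induction with the polarity-preservation lemma of Section~\ref{uni} cited as the only non-trivial input.
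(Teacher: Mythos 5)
Your argument is correct. The paper in fact states this proposition without any proof, treating it as immediate from the preceding remark that a polarity-preserving substitution sends an $\sss$ proof net to an $\sss$ proof net (the correctness criterion of Definition~\ref{pncond} is invariant under a pointwise polarity-preserving relabelling, axioms remain axioms --- possibly complex ones, which the paper explicitly allows --- and $\sigma(\sss)=\sss$ fixes the distinguished output); your induction on Definition~\ref{pninduct}, as well as the criterion-based alternative you sketch in passing, are both sound ways of making that implicit argument explicit, and the observation that the frame-forming operation factors through the forgetful map discarding exactly what $\sigma$ can alter settles the second bullet.
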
 

Two grammars $G_1$ and $G_2$  with their categories in $\calV$  are said to be \emph{equal} 
whenever 
there exists a renaming  $\nu$ such that $\nu(G_1)=G_2$. 

A substitution $\sigma$ is said to unify two categories $A,B$ if  one has $\sigma(A)=\sigma(B)$.
A substitution is said to unify a set of categories $T$ or to be a unifier for $T$  
if for all
categories $A,B$ in $T$ one has $\sigma(A)=\sigma(B)$ --- in other words, $\sigma(T)$ is a singleton. 

A substitution $\sigma$  is said to unify a categorial grammar $G$  or to be a unifier of $G$ whenever, for every word in the lexicon $\sigma$ unifies $\lex_G(w)$,
i.e. for any word $w$ in the lexicon $\lex_{\sigma(G)}(w)$ has a unique category --- in other words $\sigma(G)$ is rigid.

A unifier does not necessarily exists, but  when it does, there exists a  \emph{most general unifier (mgu)} that is a unifier $\sigma_u$ such for every unifier $\tau$ there exists a substitution $\sigma_\tau$ such that $\tau=\sigma_\tau \circ \sigma_u$. This most general unifier is unique up to renaming. 
This result also holds for unifiers that unify a set of categories and even for unifiers that unify a categorial grammar. 
\cite{Kan98}

\begin{figure} 
An algorithm for unifying two categories $C_1$ and $C_2$ 
may proceed  by managing  a finite multi-set $E$ of potential equations on terms, 
until it fails or reaches a set of equations whose left hand side are variables,  each of which appears in a unique such equation
--- a measure consisting in triple of integers ordered ensures that  this algorithm always stops. 
This set of equations $x_i=t_i$ defines a substitution by setting $\nu(x_i)=t_i$. 
Initially $E=\{C_1=C_2\}$. In the procedure below, upper case letters stand for categories, whatever they might be, $x$ for a variable, $*$ and $\diamond$ stand for binary connectives among $\lto,\lfrom,\otimes$. Equivalently, unifications  could be performed on linear formulae, as said in this article.  The most general unifier of $n$ categories can be performed by iterating  binary unification, the resulting most general unifier does not depend on  the way one proceeds. 

$$
\begin{array}{rcl} 
E\cup \{C{=}C\} &\yields& E \\ 
E\cup \{A_1{*} B_1 {=} A_2 {*} B_2 \}  &\yields& E\cup \{A_1{=}A_2 , B_1 {=} B_2 \} \\ 
E\cup \{C{=}x\} &\yields& E \cup \{x{=}C\}\\ 
\textrm{if }x\in Var(C) \quad   E\cup \{x{=}C\} &\yields& \fails \\ 
\textrm{if }x\not\in Var(C) \land x\in Var(E)   \quad E\cup \{x{=}C\} &\yields& E[x:{=}C] \cup \{x {=} C\} \\ 
\textrm{if }\diamond \neq {*} \quad E\cup \{A_1 {*} B_1 {=} A_2  \diamond  B_2 \}  &\yields&  \fails \\ 
E\cup \{\sss {=} A_2 {*} B_2 \}  &\yields&  \fails \\ 
E\cup \{A_1 {*} B_1 {=} \sss\}  &\yields& \fails \\ 
\end{array} 
$$
\caption{The unification algorithm for unifying two categories}
\label{unification} 
\end{figure}

\begin{definition} 
Let  $\pi$ be an $\sss$ proof net whose associated \SPF\ is 
$\pi_f$. 
If all the axioms in $\pi$ but the $\sss,\sss^\perp$ whose $\sss$  is $\pi$'s main output are $\alpha_i,\alpha_i^\perp$ with $\alpha_i\neq\alpha_j$ when $i\neq j$,
$\pi$ is said to be a \emph{most general labelling} of $\pi_f$. 
If $\pi_f$ is the associated \SPF\ of an $\sss$ proof net $\pi$ and $\pi_v$ one of the most general labelling of $\pi_f$, 
then $\pi_v$ is also said to be a most general labelling of $\pi$. 
The most general labelling of an $\sss$ proof net is unique up to renaming. 
\end{definition} 

We have the following obvious but important property: 

\begin{proposition} 
If $\pi_v$ is a most general labelling of an $\sss$ proof net $\pi$, then there exists a substitution $\sigma$ such that $\pi=\sigma(\pi_v)$. 
\end{proposition}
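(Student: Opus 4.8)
The plan is to unwind the definitions and produce the substitution explicitly. Recall that $\pi_v$ being a most general labelling of $\pi$ means: $\pi_v$ and $\pi$ have the same associated \SPF, say $\rho=\pi_f=(\pi_v)_f$, and moreover every axiom of $\pi_v$ other than the distinguished $\sss,\sss^\perp$ pair is of the form $\alpha_i,\alpha_i^\perp$ with the $\alpha_i$ pairwise distinct propositional variables. The key observation is that $\pi$ and $\pi_v$ are both ``labellings'' of the one underlying frame $\rho$: they have the very same tree structure of $\lts$- and $\pa$-links, the same cyclic order, and the same axiom linking, differing only in the formulae that decorate the leaves. So I would first argue that a labelling of $\rho$ is completely determined by the choice of a formula on the output tip of each axiom (the input tip then carrying its dual), together with the global constraint that the $\sss$-axiom is labelled $\sss,\sss^\perp$; this is exactly the content of the inductive presentation of proof nets in Definition~\ref{pninduct} read off from the frame.

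Next, since in $\pi_v$ the non-distinguished axioms carry pairwise distinct variables $\alpha_1,\dots,\alpha_k$, I would define $\sigma$ directly on these variables: let $\sigma(\alpha_i)$ be the formula labelling the output tip of the $i$-th axiom of $\pi$ (under the matching of axioms induced by the common frame $\rho$), and let $\sigma$ be the identity on all other variables. Because the $\alpha_i$ are distinct, this is a well-defined substitution, and since each $\alpha_i$ is a propositional variable it is automatically polarity-preserving in the sense of Section~\ref{uni} (we are mapping a positive atom to the positive formula on an output tip). Then I claim $\sigma(\pi_v)=\pi$: applying $\sigma$ to $\pi_v$ leaves the link structure, cyclic order and axiom linking untouched (substitution acts only on formulae), it sends each non-distinguished axiom $\alpha_i,\alpha_i^\perp$ to the axiom of $\pi$ with output $\sigma(\alpha_i)$ and input $\sigma(\alpha_i)^\perp=\sigma(\alpha_i^\perp)$, and it fixes the $\sss,\sss^\perp$ axiom. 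To see that the induced labelling on the whole formula trees agrees with $\pi$, I would observe that in a cut-free proof net every formula is recovered from the formulae at the axiom leaves by the bottom-up link operations ($\lts$ and $\pa$) of Definition~\ref{pninduct}, and substitution commutes with these connectives; since $\pi$ and $\sigma(\pi_v)$ agree at all axiom leaves and have the same link structure, they agree everywhere. Hence $\pi=\sigma(\pi_v)$.

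One small point to check is that $\pi_v$ really does exist and makes sense as stated — but this is the previous definition's assertion ``the most general labelling is unique up to renaming'', which presupposes existence; existence itself is immediate by taking $\rho=\pi_f$ and relabelling its axioms with fresh distinct variables, which yields an $\sss$ proof net by the inductive characterisation (Definition~\ref{pninduct}), since relabelling atoms preserves the correctness criterion of Definition~\ref{pncond}. So the proposition is essentially a repackaging of Proposition~\ref{proofsubst}: there, a polarity-preserving $\sigma$ sends $\pi$ to a net with the same frame; here we run this in the other direction, using the maximal generality of $\pi_v$'s labelling — every labelling of $\rho$, in particular $\pi$, is a substitution instance of the all-distinct-variables one.

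The only genuinely delicate step, and the one I expect to need the most care, is the claim that $\pi$ and $\sigma(\pi_v)$ have \emph{identical} formula trees and not merely the same frame — i.e. that fixing the axiom-leaf formulae together with the frame pins down the entire proof net. This is true because the correctness criterion forces the link above any internal node to be the unique connective compatible with the polarities and shapes of the two immediate subtrees (an output/output pair under a $\lts$ gives an output, an input/input under $\pa$ gives an input, etc., matching the inductive clauses), so the labelling propagates uniquely upward from the leaves; I would spell this out by induction on the construction of $\rho$ via Definition~\ref{pninduct}, carrying along the statement ``the labelling of the net is determined by the labelling of its axioms'' and checking it is preserved by each of the three inductive steps (axiom, $\lts$-link, $\pa$-link). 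Everything else is bookkeeping about how substitution, which by definition touches only formulae and never the combinatorial skeleton, interacts trivially with the frame.
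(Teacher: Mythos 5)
Your proof is correct. The paper gives no argument for this proposition at all --- it introduces it as an ``obvious but important property'' --- and your explicit construction of $\sigma$ (sending each pairwise-distinct axiom variable $\alpha_i$ of $\pi_v$ to the formula on the output tip of the matching axiom of $\pi$, checking polarity preservation, and propagating the labels up the shared $\lts$/$\pa$ link structure recorded in the common frame) is precisely the one-line argument the authors leave implicit, spelled out in full.
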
 


\section{An RG-like algorithm for learning Lambek categorial grammars from proof frames}
\label{RG}

Assume that we wish to define a \emph{consistent}  learning function $\phi$ from positive examples for a class of categorial grammars (see definition \ref{learningprop}). 
Assume that $\phi$ already mapped  $e_1,\ldots,e_n$ to a grammar $G_n$ with $e_1,\ldots,e_n\subset \lang(G_n)$ ($\phi$ being consistent). If $e_{n+1}\in\lang(G_n)$ it is natural to define 
$\phi(e_1,\ldots,e_n,e_{n+1})=G_{n+1}$ as being $G_n$.
Otherwise, that is when $e_{n+1}\not\in\lang(G_n)$, 
there exists some word $w^k$ in the sentence $e_{n+1}$ 
such that no category of $\lex_{G_n}(w)$ is able to account for the behaviour of $w^k$ in the sentence $e_{n+1}$. 
A natural but misleading idea would be to say: if word $w^k$ needs category $c^k_{n+1}$ in  example $e_{n+1}$, 
let us add $c^k$  to $\lex_{G_n}(w^k)$ to define $\lex_{G_{n+1}}(w^k)$. Doing so for every occurrence of a problematic word in the sentence $e_{n+1}$, actually leads to  
 $e_1,\ldots,e_n,e_{n+1}\subset \lang(G_{n+1})$ and in the limit we
 should  obtain the smallest grammar $G_\infty$ such that $\forall i\ e_1,\ldots,e_i\in\lang{G_\infty}$. 
Doing so, there is little  hope to identify a language in the limit in Gold sense. Indeed, nothing guarantees that the process will stop, and a categorial grammar with infinitely many categories for some word is not even a grammar, that is a finite description of a possibly infinite language. 
Thus, an important guideline for learning categorial grammars is to bound the number of categories per word. That is the reason why we introduced in section \ref{categorial} the notion of \emph{$k$-valued} categorial grammars, with at most $k$ categories per word. We shall start by learning  \emph{rigid} ($1$-valued) Lambek categorial grammars with product (\LCGp) and this method extends to $k$-valued \LCGp. 

Our algorithm can be viewed as an extension to Lambek grammars with product of the RG algorithm (learning Rigid Grammars) introduced by 
Buszkowski and Penn in \cite{Bus87b,BP90} initially designed for rigid AB grammars.
A difference from their seminal work is that the data ones learns from 
were functor argument trees while here they are proof frames (or natural deduction frames  when the product is not used see section \ref{nd}). 
Proof frames may seem less natural than natural deductions, but we have two good reasons for using them:
\begin{itemize} 
\item Product is of interest for some grammatical constructions  as examples \ref{exetmoot} and \ref{expastpartmorrill} show while there is no fully satisfactory natural deduction for Lambek calculus with product. \cite{AR07cie,Amblard07phd}
\item Proof frames resemble dependency structures, since an axiom between the two conclusions corresponding to two words expresses a dependency between these two words. 
\end{itemize} 

To illustrate our learning algorithm we shall proceed with the three examples below, whose corresponding $\sss$ proof frames are given in figure \ref{3spn}. As their \SPF\ structures show, the 
middle one (\ref{ex2RG}) involves a positive product (the $I\pa I$ in the category of \ma{and}) 
and the last one (\ref{ex3RG}) involves an introduction rule (the $O\pa I$ in the category of \ma{that}). 

\begin{exe}
\ex \label{ex1RG} 
Sophie gave a kiss to Christian
\ex \label{ex2RG} 
Christian gave a book to Anne and a kiss to Sophie
\ex \label{ex3RG} 
Sophie liked a book that Christian liked. 
\end{exe} 


Usually,  in order to manipulate right handed sequents with conclusions only, 
proof nets reverse the order of the hypotheses which correspond to words, as explained in section \ref{cgpf} 
--- in some papers by  Glynn Morrill e.g. \cite{Mor98} the order is not reversed, but then the conclusions of the proof net, that are the linear formulae which are the dual of the Lambek categories are less visible. 
One solution that will make the supporters of either notation happy is to write the sentences vertically as  we do in figure \ref{3spn}.

\begin{figure} 
\exun
\hfill
\exdeux
\hfill 
\extrois
\vfill 
\caption{Three S proof frames:  three structured sentences for our learning algorithm.} 
\label{3spn} 
\end{figure}

\begin{definition}[RG like algorithm for \SPF s] 
Let $D=(\pi^k_f)_{1\leq k\leq n}$ be the $\sss$ proof frames associated with the examples $(e^k_f){1\leq k\leq n}$,
and let $(\pi^k)$ be most general labellings of the $(\pi^k_f)_{1\leq k\leq n}$. We can assume that they have no common category variables ---
this is possible because there are infinitely many category variables is infinite and because most general labellings are defined up to renaming. 
If example $e^k$ contains $n$ words $w^k_1,\ldots,w^k_n$ then $\pi^k$ has $n$ conclusions $(c^k_n)-,\ldots,(w^k_1)-,\sss$,
where all the $c^k_i$ are Lambek categories. 

Let $GF(D)$ be the (non necessarily rigid) grammar defined by the assignments $w_i^k:c_i^k$ --- observe that a for a given word $w$ there may exist several $i$ and $k$ such that $w=w_i^k$. 

Let $RG(D)$ be the rigid grammar defined as the most general unifier of the categories $\lex(w)$ for each word in the lexicon when such a most general unifier exists. 

We define $\phi(D)$ as $RG(D)$. When unification fails, the grammar is defined by $\lex(w)=\emptyset$ for those words whose categories do not unify.\footnote{There is an unimportant choice here: we could either say that $\phi$ is undefined in this case. In both cases $\phi$ does not seem to be consistent, that is to propose a grammar that actually generates the examples seen so far. However, as we shall see, in the convergence proof, when the algorithm is applied to a language in the class, categories of a given word always unify, and $\phi$ is a total and consistent learning function.\label{phipartial}}  
\end{definition} 

With the \SPF\ of our examples in \ref{3spn} yields the following type assignments where the variable $x_n$ corresponds to the axiom number $n$ in the examples, they are all different as expected
--- remember that  $\sss$ is not a category variable but a constant.

$$
\begin{array}{p{1.5cm}|l|l|} 
\textbf{word} & \mathbf{category\ (Lambek)} &  \mathbf{category^\perp\ (linear\ logic)}  \\ \hline 
and &  (( (x_{23}\otimes  x_{25})   \lto x_{22} ) ...\hspace*{2em}   & ((x_{28} \otimes  x_{27})\otimes  ...\\ 
 & \hfill    ...\lfrom (x_{28}\otimes  x_{27})) &  \hspace*{2em} ...(x_{22}\otimes  (x_{23}\otimes  x_{25}))) \\ \hline 
that & ((x_{34}\lto x_{33})\lfrom (x_{36}\lfrom x_{35})) & ((x_{36} \pa x_{35}^\perp) \otimes (x_{33}^\perp \otimes x_{34})) \\ \hline 
liked & (x_{31}\lto \sss)\lfrom x_{32} & x_{32} \otimes (\sss \otimes x_{31}) \\ \cline{2-3} 
 & (x_{37}\lto x_{36})\lfrom x_{35} & x_{35} \otimes (x_{36} \otimes x_{37}) \\ \hline 
gave & ((x_{11}\lto \sss) \lfrom (x_{13}\otimes x_{12})) & (x_{13} \otimes x_{12}) \otimes (\sss \otimes  x_{11}) \\  \cline{2-3} 
 & ((x_{21}\lto \sss) \lfrom x_{22}) & x_{22} \otimes  (\sss \otimes  x_{21}) \\  \hline 
to & x_{12}\lfrom x_{15} & x_{15} \otimes  x_{12}^\perp\\  \cline{2-3} 
 & x_{25}\lfrom x_{26} & x_{26} \otimes  x_{25}^\perp\\ \cline{2-3} 
 & x_{27}\lfrom x_{20} & x_{20} \otimes  x_{27}^\perp\\ \hline 
a & x_{13}\lfrom x_{14} &x_{14} \otimes  x_{13}^\perp\\ \cline{2-3} 
& x_{23}\lfrom x_{24} & x_{24} \otimes  x_{23}^\perp\\ \cline{2-3} 
 & x_{28}\lfrom x_{29} &  x_{29} \otimes  x_{28}^\perp\\ \cline{2-3} 
 & x_{32} \lfrom x_{33} & x_{33} \otimes  x_{32}^\perp\\ \hline 
 Anne & x_{26} & x_{26}^\perp\\ \hline 
Sophie & x_{11} & x_{11}^\perp\\ \cline{2-3} 
 & x_{20} & x_{20}^\perp\\ \cline{2-3} 
 & x_{31} & x_{31}^\perp\\ \hline  
Christian & x_{15} & x_{15}^\perp\\ \cline{2-3} 
 & x_{21} & x_{21}^\perp\\ \cline{2-3} 
 & x_{37} & x_{37}^\perp\\ \hline 
book & x_{24}& x_{24}^\perp  \\ \cline{2-3} 
 & x_{34}  & x_{34}^\perp\\ \hline  
kiss & x_{14} & x_{14}^\perp  \\ \cline{2-3} 
 & x_{29} & x_{29}^\perp \\ \hline 
\end{array} 
$$ 

Unifications either performed on Lambek categories $c^k_i$ 
or on the corresponding linear formulae (the $(c^k_i)-$ that appear in the second column) yield the following equations:

$$ 
\begin{array}[t]{l}
\mbox{liked}\\ \hline
x_{31}=x_{37}\\ 
x_{36}=\sss\\
x_{32}=x_{35}\\ \\ 
\mbox{gave}\\ \hline 
x_{11}=x_{21}\\ 
x_{22}=x_{13}\otimes x_{12}\\ \\ 
\mbox{to}\\ \hline 
x_{12}=x_{25}=x_{27}\\ 
x_{15}=x_{26}=x_{20}\\ 
\end{array}
\qquad \qquad \qquad 
\begin{array}[t]{l}
\mbox{a}\\ \hline 
x_{13}=x_{23}=x_{28}=x_{32}\\ 
x_{14}=x_{24}=x_{29}=x_{33}\\ \\ 
\mbox{Sophie}\\ \hline 
x_{11}=x_{20}=x_{31}\\ \\ 
\mbox{Christian} \\ \hline 
x_{15}=x_{21}=x_{37}\\ \\ 
\mbox{kiss}\\ \hline  
x_{14}=x_{29}\\ \\ 
\mbox{book}\\ \hline  
x_{24}=x_{34}\\ 
\end{array} 
$$

These unification equations can be solved by setting: 

$$
\begin{array}{l}
x_{36}=\sss 
\\ 
x_{22}=x_{13}\otimes x_{12}=np\otimes pp 
\\ 
x_{12}=x_{25}=x_{27}=pp  \qquad \mbox{prepositional phrase introduced by \ma{to}}
\\ 
x_{13}=x_{23}=x_{28}=x_{32} =x_{35}=np  \qquad  \mbox{noun phrase}
\\ 
x_{14}=x_{24}=x_{29}=x_{33}= x_{34}=cn  \qquad  \mbox{common noun}
\\ 
x_{11}=x_{20}=x_{31}=x_{15}=x_{21}=x_{37}= x_{15}=x_{26}=pn  \qquad  \mbox{proper name} 
\end{array} 
$$

The grammar can be unified into a rigid grammar $G_r$ , namely: 

$$
\begin{array}{p{1.5cm}|l|l|} 
\textbf{word} & \mathbf{category (Lambek)} &  \mathbf{category^\perp (linear logic)}  \\ \hline  
and &  (( (np\otimes  pp)   \lto (np \otimes pp )... \hspace*{1em}   & ((np\otimes pp)\otimes...  \\
 & \hfill    ...\lfrom (np \otimes  pp)) &  \hspace*{1em} ...((np\otimes pp)^\perp \otimes  (np\otimes pp))) \\ \hline  
that & ((n\lto n)\lfrom (\sss\lfrom np)) & ((\sss\pa np^\perp) \otimes (n^\perp \otimes n)) \\ \hline  
liked & (pp\lto \sss)\lfrom np & np \otimes (\sss \otimes pn) \\ \hline  
gave & (pp\lto \sss) \lfrom (pp\otimes np)) & (np \otimes pp) \otimes (\sss \otimes  pn) \\ \hline   
to & np \lfrom pn & pn \otimes  np^\perp\\ \hline  
a & np\lfrom cn & cn \otimes  pp^\perp\\ \hline  
Anne & pn & pn^\perp\\ \hline  
Sophie & pn & pn^\perp\\ \hline  
Christian & pn & pn^\perp\\ \hline  
book & cn & cn^\perp\\ \hline  
kiss & cn & cn^\perp\\ \hline  
\end{array} 
$$

As stated in proposition \ref{proofsubst},  one easily observes that the \SPF\ are indeed produced by the rigid grammar $G_r$. 

Earlier on, in the definition of an \SPF, we allowed non atomic axioms, and we can now precisely see why: 
the axiom $22$ could be instantiated by the single variable $x_{22}$
but, when performing unification,  it got finally instantiated with $x_{13}\otimes x_{12}$. 
Thus,  if we would have forced axioms to always be on propositional variables, 
the grammar $G_r$ would not have generated the \SPF\ of example $2$ 
but  the slightly different 
\SPF\ with  the axioms $x_{13}, x_{13}^\perp$ 
and  $x_{12}^\perp,x_{12}$ linked by an $\otimes$ link $x_{13}^\perp\otimes x_{12}$ 
and by a $\pa$ link $x_{12}^\perp\pa x_{13}^\perp$ in place of the axiom $22$.

\section{Convergence of the learning algorithm}
\label{conv}

This algorithm converges in the sense defined by Gold \cite{Gold67}, see definition \ref{goldcvg}. 
The first  proof of convergence of a learning algorithm for categorial grammars is the proof by Kanazawa  
 \cite{Kan94} of the convergence of the algorithm of Buszkowki and Penn \cite{BP90} for learning rigid basic  categorial grammars from functor argument  structures (name free natural deduction with $\lto$ and $\lfrom$ elimination rules only).
 Although we learn a different class of grammars from different structures, our proof is quite similar. 
 It follows \cite{Bon00} that is a simplification of  \cite{Kan98}. 
 
 
The proof of convergence makes use  of the following notions and notations: 
\begin{description} 
\item{$G\subset G'$} This reflexive relation between $G$ and $G'$
  holds whenever every lexical category assignment $a:T$ in $G$ is  in $G'$
  as well --- in particular when   $G'$ is rigid, so is $G$, and both grammars
  are identical. Note that this is just the normal subset relation for
  each of the words in the lexicon $G'$:
  $\lex_G(a) \subset \lex_{G'}(a)$ for every $a$ in the lexicon of
  $G'$, with $\lex_G(a)$ non-empty. Throughout the proof,  
  we shall  also use the subset relation symbol to signify inclusion of
  the generated \emph{languages}; the intended meaning of \ma{$\subset$} should always
  be clear from the context.
\item{\normalfont{\it size of a grammar}} The size of a grammar is
  simply the sum of the sizes of the occurrences of categories in the lexicon, where the size of a category is its number of occurrences of base categories (category variables or $\sss$). 
\item{$G\sqsubset G'$} This reflexive relation between $G$ and $G'$  holds when there exists a substitution $\sigma$ such that 
$\sigma(G)\subset G'$ which does not identify different categories of a given word, but this is 
always the case when the grammar is rigid. 
\item{$\SPF(G)$} As said earlier,  $\SPF(G)$ is the the set of $\sss$ proof frames  generated by a Lambek categorial grammar $G$. 
\item{$GF(D)$} Given a set $D$ of structured examples i.e. a set of $\sss$ proof frames,  the grammar $GF(D)$ is define as in the examples above: 
it is obtained by collecting the categories of each word in the various examples of $D$. 
\item{$RG(D)$} Given a set of \SPF s $D$, $RG(D)$ is 
the rigid grammar/lexicon obtained by applying the most general unifier, when it exists,  to $GF(D)$
---  in case the categories of a given word do not unify no category is assigned to this word, see footnote \ref{phipartial}. 
\end{description}




\begin{proposition}\label{finite} 
Given a grammar $G$, the number of 
grammars $H$ such that $H\sqsubset G$ is finite.
\end{proposition}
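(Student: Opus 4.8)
The plan is to show that every grammar $H$ with $H\sqsubset G$ is, up to renaming, drawn from a fixed finite pool of lexica, by bounding three things independently: the set of words $H$ may mention, the number of categories it assigns to each word, and the size of each such category.

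So, fixing $G$, first I would set $W$ to be the (finite) set of words occurring in its lexicon, $k=\max_{w\in W}|\lex_G(w)|$, and $N$ the maximal size of a category occurring in $\lex_G$. Now suppose $H\sqsubset G$, witnessed by a substitution $\sigma$ with $\sigma(H)\subset G$ that does not identify two distinct categories of a single word of $H$. The first observation is that $H$ can only mention words of $W$: for any word $w$ in the lexicon of $H$ we have $\sigma(\lex_H(w))\subseteq\lex_G(w)$, which is therefore non-empty, so $w\in W$. Second, the side condition in the definition of $\sqsubset$ says precisely that $\sigma$ is injective on $\lex_H(w)$, and it maps this set into $\lex_G(w)$, so $|\lex_H(w)|\le|\lex_G(w)|\le k$. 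Third, for each category $T\in\lex_H(w)$ we have $\sigma(T)\in\lex_G(w)$; since applying a substitution replaces every occurrence of a base category by a category of size at least $1$, the size of a category can never decrease under $\sigma$, whence $|T|\le|\sigma(T)|\le N$.

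It then only remains to count, and this is where I expect the one subtle point. A priori there are infinitely many categories of size at most $N$, because there are infinitely many category variables, so the argument must first cap the number of variables actually used by $H$ before passing to a fixed finite alphabet. To do this, I would note that the total number of occurrences of base categories in the lexicon of $H$ is at most $|W|\cdot k\cdot N$, so $H$ uses at most $B:=|W|\cdot k\cdot N$ distinct variables; since grammars are considered up to renaming, we may assume these variables all lie in $\{x_1,\dots,x_B\}$. There are then only finitely many categories of $\calV$ of size at most $N$ built over the finite alphabet $\{\sss,x_1,\dots,x_B\}$ with the three connectives $\lto,\lfrom,\otimes$ --- each such category is a binary tree with at most $N$ leaves and fewer than $2N$ nodes, each node carrying one of finitely many labels --- so there are finitely many admissible lexica (a choice, for each word of $W$, of a set of at most $k$ categories from this finite pool), hence finitely many $H$ up to renaming with $H\sqsubset G$. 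It is worth remarking that the side condition on $\sigma$ is genuinely used: without it, $H$ could assign a single word an unbounded family of categories that $\sigma$ collapses, breaking the variable count, so the bookkeeping really does rest on the valence bound $|\lex_H(w)|\le k$.
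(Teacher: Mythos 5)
Your proof is correct and follows essentially the same route as the paper's: both rest on the observations that $H$ is supported on $G$'s finite lexicon, that the non-identification condition bounds the valence $|\lex_H(w)|$, and that substitutions cannot decrease category size, so that up to renaming only finitely many grammars fit under $G$. You merely make explicit the final counting step (bounding the number of variables so that ``finitely many grammars of bounded size up to renaming'' is literally true), which the paper leaves as an assertion.
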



\begin{proof} There are only finitely many grammars which are included in $G$,
since $G$ is a finite set of assignments. 
Whenever $\sigma(H)=K$ for some substitution $\sigma$ 
the size of $H$ is smaller or equal to the size of $K$, 
and, up to renaming, there are only finitely many grammars smaller than a given grammar.

By definition, if $H\sqsubset G$ then there exist $K\subset G$ and a
substitution $\sigma$ such that $\sigma(H)=K$. Because there are only
finitely many $K$ such that $K\subset G$, and  for every $K$ there are
only finitely many $H$ for which there could exist a substitution $\sigma$ with $\sigma(H)=K$ (substitutions increase the category sizes)
we conclude that, up to renaming,  there are only finitely many $H$ such that $H\sqsubset G$.  \qed
\end{proof} 


From the definition of $\sqsubset$ and from proposition \ref{proofsubst} one immediately has: 

\begin{proposition}\label{substitution}  
If $G\sqsubset G'$ 
then $\SPF(G)\subset \SPF(G')$.
\end{proposition}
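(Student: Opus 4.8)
The statement to prove is Proposition~\ref{substitution}: if $G\sqsubset G'$ then $\SPF(G)\subset\SPF(G')$.

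\medskip

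The plan is to unfold the definition of $\sqsubset$ and then invoke Proposition~\ref{proofsubst} together with the inductive/graph characterisation of proof nets. First I would recall that $G\sqsubset G'$ means there is a substitution $\sigma$ (preserving polarities, and not identifying distinct categories of a given word --- automatic here since the grammars of interest are rigid) with $\sigma(G)\subset G'$, i.e.\ $\lex_{\sigma(G)}(w)\subseteq\lex_{G'}(w)$ for every word $w$ in the lexicon of $G'$. Then I would take an arbitrary $\rho\in\SPF(G)$ and show $\rho\in\SPF(G')$.

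\medskip

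The key steps, in order, are: (1) By Definition~\ref{frame}, $\rho\in\SPF(G)$ means there is a proof net $\pi$ generated by $G$ --- say with conclusions $(c^n)^-,\ldots,(c^1)^-,\sss^+$ where $c^i\in\lex_G(w^i)$ --- such that $\rho=\pi_f$. (2) Apply Proposition~\ref{proofsubst} to $\sigma$ and $\pi$: since $\sigma$ preserves polarities and $\pi$ is generated by $G$, the proof net $\sigma(\pi)$ is generated by $\sigma(G)$ and, crucially, $\sigma(\pi)_f=\pi_f=\rho$. (3) Observe that $\sigma(\pi)$ is a proof net with conclusions $\bigl(\sigma(c^n)\bigr)^-,\ldots,\bigl(\sigma(c^1)\bigr)^-,\sss^+$ and $\sigma(c^i)\in\sigma(\lex_G(w^i))=\lex_{\sigma(G)}(w^i)$; since $\lex_{\sigma(G)}(w^i)\subseteq\lex_{G'}(w^i)$, each $\sigma(c^i)$ lies in $\lex_{G'}(w^i)$. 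Hence $\sigma(\pi)$ is a proof net whose hypotheses are all chosen from the lexicon of $G'$, i.e.\ $\sigma(\pi)$ is generated by $G'$. (4) Therefore $\rho=\sigma(\pi)_f\in\SPF(G')$. Since $\rho$ was arbitrary, $\SPF(G)\subset\SPF(G')$.

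\medskip

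The only slightly delicate point --- really the sole place where something must be checked rather than merely quoted --- is step (3): that applying $\sigma$ to a proof net generated by $G$ yields a proof net generated by $G'$, which requires both that ``being a proof net'' is stable under polarity-preserving substitution (this is exactly the content recalled before Proposition~\ref{proofsubst}, and is why the paper deliberately allows axioms on complex formulae) and that the substituted hypothesis categories still belong to the target lexicon. Both are immediate from $\sigma(G)\subset G'$ and Proposition~\ref{proofsubst}; there is no real obstacle, and the proposition follows in a couple of lines, which is presumably why the authors call it immediate.
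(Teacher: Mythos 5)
Your proof is correct and follows exactly the route the paper intends: the paper gives no explicit proof, stating only that the proposition follows ``immediately'' from the definition of $\sqsubset$ and Proposition~\ref{proofsubst}, and your argument is precisely the spelled-out version of that one-liner (unfold $\sqsubset$ to get $\sigma$ with $\sigma(G)\subset G'$, apply Proposition~\ref{proofsubst} to get $\sigma(\pi)$ generated by $\sigma(G)$ with the same proof frame, and note that a proof net generated by a subgrammar of $G'$ is generated by $G'$).
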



\begin{proposition}\label{b->a}
If $GF(D)\sqsubset G$ then $D\subset \SPF(G)$. 
\end{proposition}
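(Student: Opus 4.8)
The plan is to unwind the definitions and reduce Proposition~\ref{b->a} to Proposition~\ref{substitution}, which we are allowed to invoke. Recall that $D$ is a finite set of $\sss$ proof frames and $GF(D)$ is the grammar obtained by collecting, for each word $w$, all the Lambek categories assigned to (occurrences of) $w$ in most general labellings of the frames in $D$. The hypothesis is $GF(D)\sqsubset G$, meaning there is a substitution $\sigma$ (not identifying different categories of a word, which is automatic here when $G$ is rigid) with $\sigma(GF(D))\subset G$.

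First I would fix an arbitrary $\rho\in D$ and show $\rho\in\SPF(G)$. By definition of a most general labelling, there is an $\sss$ proof net $\pi_v$ whose associated \SPF\ is $\rho=(\pi_v)_f$, and whose non-principal axioms carry pairwise distinct fresh variables; moreover we may take the labellings of distinct frames in $D$ to share no variables. The key observation is that $\pi_v$ is a parse structure whose word-conclusions are exactly the categories $GF(D)$ assigns to the words of the sentence underlying $\rho$: that is, $\pi_v$ is generated by the (non-rigid) grammar $GF(D)$, by the very way $GF(D)$ was built. Hence, reading $GF(D)$ itself as a Lambek grammar, we have $\rho\in\SPF(GF(D))$.

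Next I would apply Proposition~\ref{substitution}. From $GF(D)\sqsubset G$ it gives $\SPF(GF(D))\subset\SPF(G)$, and therefore $\rho\in\SPF(G)$. Since $\rho\in D$ was arbitrary, $D\subset\SPF(G)$, which is the claim. Alternatively, if one prefers to argue directly rather than through Proposition~\ref{substitution}: apply the substitution $\sigma$ witnessing $GF(D)\sqsubset G$ to the proof net $\pi_v$; by Proposition~\ref{proofsubst}, $\sigma(\pi_v)$ is generated by $\sigma(GF(D))$ and hence by $G$ (since $\sigma(GF(D))\subset G$), and $\sigma(\pi_v)_f=(\pi_v)_f=\rho$, so $\rho\in\SPF(G)$.

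The only genuine content to check — and the step I expect to require the most care — is the claim that $\pi_v$ is a proof net \emph{generated by} $GF(D)$ in the technical sense: its conclusions must be $(c_n)^-,\ldots,(c_1)^-,\sss^+$ with each $c_i\in\lex_{GF(D)}(w_i)$. This is essentially true by construction of $GF(D)$ from the most general labellings, but one must be slightly careful that passing to a most general labelling does not change the \SPF\ (it does not, since the labelling only reinstates axiom formula-names, leaving the underlying frame untouched) and that the bookkeeping of which category goes with which word-occurrence is consistent; everything else is a routine invocation of the already-established propositions.
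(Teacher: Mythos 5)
Your proof is correct and follows essentially the same route as the paper's: the paper also argues that $D\subset \SPF(GF(D))$ holds by construction of $GF(D)$ and then applies Proposition~\ref{substitution} to the hypothesis $GF(D)\sqsubset G$ to conclude. Your extra unwinding of why the most general labelling $\pi_v$ is generated by $GF(D)$ just makes explicit what the paper compresses into ``by construction of $GF(D)$.''
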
 

\begin{proof} 
By construction of $GF(D)$, we have $D\subset \SPF(GF(D))$. In addition, because of
proposition~\ref{substitution}, we have $\SPF(GF(D))\subset \SPF(G)$. \qed
\end{proof} 

\begin{proposition}\label{genall}
If $RG(D)$ exists then $D\subset \SPF(RG(D))$.
\end{proposition}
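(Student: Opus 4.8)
The plan is to derive this immediately from Proposition~\ref{b->a} by checking that $GF(D)\sqsubset RG(D)$. Recall that, when it exists, $RG(D)$ is by construction nothing but the image of $GF(D)$ under a most general unifier: writing $\sigma_u$ for the mgu of the category sets $\lex_{GF(D)}(w)$, taken word by word, one has $RG(D)=\sigma_u(GF(D))$, and this grammar is rigid precisely because each $\sigma_u(\lex_{GF(D)}(w))$ is a singleton.

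First I would note that $\sigma_u$ is a Lambek, i.e.\ polarity-preserving, substitution — it is generated by a map sending category variables to Lambek categories — so the machinery of Section~\ref{uni}, in particular Proposition~\ref{proofsubst}, applies to it. Then, since $\sigma_u(GF(D))=RG(D)\subset RG(D)$ for the grammar-inclusion relation, the definition of $\sqsubset$ gives $GF(D)\sqsubset RG(D)$. Applying Proposition~\ref{b->a} with $G:=RG(D)$ yields $D\subset\SPF(RG(D))$, which is the claim.

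If one prefers to avoid the relation $\sqsubset$ altogether, the same conclusion can be reached directly: each element $\rho$ of $D$ is some $\pi^k_f$, where the most general labelling $\pi^k$ has as conclusions exactly the categories assigned by $GF(D)$, so $\pi^k$ is a parse structure generated by $GF(D)$ and hence $\rho\in\SPF(GF(D))$; then Proposition~\ref{proofsubst} applied to $\sigma_u$ says that $\sigma_u(\pi^k)$ is a parse structure generated by $\sigma_u(GF(D))=RG(D)$ with the same associated frame, $\sigma_u(\pi^k)_f=\pi^k_f=\rho$, so $\rho\in\SPF(RG(D))$. Running this over all $\rho\in D$ gives $D\subset\SPF(RG(D))$.

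There is essentially no hard part once one recognises that $RG(D)$ is literally the $\sigma_u$-image of $GF(D)$: the statement then reduces entirely to facts already established (Propositions~\ref{proofsubst}, \ref{substitution}, \ref{b->a}). The only point meriting a line of justification is that the most general unifier is a polarity-preserving substitution, so that Proposition~\ref{proofsubst} (equivalently Proposition~\ref{substitution}) is legitimately applicable; the side clause in the definition of $\sqsubset$ is not an obstacle here, since the witnessing inclusion $\sigma_u(GF(D))\subset RG(D)$ is in fact an equality. \qed
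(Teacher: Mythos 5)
Your proof is correct and follows exactly the paper's own argument: observe that $RG(D)=\sigma_u(GF(D))$, deduce $GF(D)\sqsubset RG(D)$, and apply Proposition~\ref{b->a} with $G=RG(D)$. The extra direct verification via Proposition~\ref{proofsubst} is just an unfolding of the same chain and adds nothing that needs separate checking.
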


\begin{proof}
By definition $RG(D)=\sigma_u(GF(D))$ where $\sigma_u$ is the most general unifier of all the categories of each word. So we have $GF(D)\sqsubset RG(D)$,
and applying proposition~\ref{b->a} with $G=RG(D)$ we obtain 
$D\subset \SPF(RG(D))$. \qed
\end{proof} 

\begin{proposition}\label{a->b} 
If $D\subset \SPF(G)$ then $GF(D)\sqsubset G$.
\end{proposition}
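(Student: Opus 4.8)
The plan is to build, from the proof nets that witness $D\subset\SPF(G)$, a single substitution $\sigma$ with $\sigma(GF(D))\subset G$; this is the converse of the passage exploited in Proposition~\ref{b->a}, and it is the ingredient still missing for $GF(D)\sqsubset G$ and $D\subset\SPF(G)$ to be equivalent.

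First I would unfold the data. List $D=\{\pi^1_f,\dots,\pi^m_f\}$ and let $\rho^1,\dots,\rho^m$ be the most general labellings chosen to define $GF(D)$; as the construction permits, take them with pairwise disjoint sets of category variables. If $e^k$ has words $w^k_1,\dots,w^k_{n_k}$, then $\rho^k$ has conclusions $(d^k_{n_k})^-,\dots,(d^k_1)^-,\sss^+$, and $GF(D)$ is the lexicon collecting the assignments $w^k_i:d^k_i$. The hypothesis $\pi^k_f\in\SPF(G)$ gives, by the definition of $\SPF(G)$, a proof net $\pi^k$ generated by $G$ with $(\pi^k)_f=\pi^k_f$; being generated by $G$ means $\pi^k$ has conclusions $(c^k_{n_k})^-,\dots,(c^k_1)^-,\sss^+$ with $c^k_i\in\lex_G(w^k_i)$.

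The key step is to specialise each $\rho^k$ down to $\pi^k$. Since $\pi^k_f$ is the $\sss$ proof frame associated with $\pi^k$ and $\rho^k$ is a most general labelling of $\pi^k_f$, $\rho^k$ is by definition also a most general labelling of $\pi^k$; so the proposition stating that a proof net is an instance, through a substitution, of any of its most general labellings yields a substitution $\sigma_k$ with $\pi^k=\sigma_k(\rho^k)$. Applying a substitution to a proof net applies it to all its formulae, in particular its conclusions, and --- as explained in section~\ref{uni} --- for polarity-preserving substitutions this commutes with the Lambek/linear translation; reading off the conclusions one at a time then gives $\sigma_k(d^k_i)=c^k_i$ for all $i$.

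Finally I would glue the pieces. Because the $\rho^k$ have pairwise disjoint variables, the $\sigma_k$ have disjoint domains, so $\sigma:=\sigma_1\cup\dots\cup\sigma_m$, extended by the identity, is a well-defined substitution, and for every word $w$ occurring in $GF(D)$ we get $\sigma(\lex_{GF(D)}(w))=\{\sigma_k(d^k_i)\mid w^k_i=w\}=\{c^k_i\mid w^k_i=w\}\subset\lex_G(w)$; hence $\sigma(GF(D))\subset G$, that is, $GF(D)\sqsubset G$. I expect the only point needing care to be the bookkeeping around ``most general labelling'': one must check that $\rho^k$, obtained as a most general labelling of the \emph{frame} $\pi^k_f$, genuinely qualifies as a most general labelling of the \emph{proof net} $\pi^k$ (so the specialisation proposition applies), and that the conclusions of $\sigma_k(\rho^k)$ then match those of $\pi^k$ position by position; once that is settled, the gluing via disjointness of variables is routine.
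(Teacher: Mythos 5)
Your proof is correct and takes essentially the same route as the paper's: both construct the witnessing substitution by sending each category variable (which labels exactly one axiom of exactly one example) to the label of the corresponding axiom in a proof net generated by $G$, and then read off $\sigma(GF(D))\subset G$ by matching the conclusions. The only cosmetic difference is that you factor this through the proposition on most general labellings and glue the per-example substitutions via disjointness of variables, whereas the paper defines the single global substitution directly, variable by variable.
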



\begin{proof} 
By construction of $GF(D)$, 
each category variable $x$ labels at most one axiom of  at most one  
\SPF\ of $D$. 
According to the hypothesis $D\subset \SPF(G)$, every \SPF\ $e_i$  in $D$ is the \SPF\  
associated with an $\sss$ proof net $\pi_i$ generated by $G$, and let us chose one such $\pi_i$ in case there are several of them. For every category variable $x$ labelling  the positive tip of an axiom $ax_i^j$ 
in some of the $e_i$ 
we can define a substitution by 
 $\sigma(x)=T$ where $T$ is the category that labels the positive tip of the same axiom $ax_i^j$ in $\pi_i$: 
indeed $x$ occurs once,
and such a substitution is well defined.   When this substitution is applied to $GF(D)$ it 
yields a grammar which only contains assignments from $G$ --- by applying the substitution to the whole 
\SPF, it remains a well-categorised \SPF, and in particular the formulae  on
the conclusions corresponding to the words, that are the dual of the Lambek categories in the lexicon,  must coincide. 
\footnote{One can alternatively proceeds with positive linear formulae $F$ as subsection \ref{formulae} shows.} Hence we find a substitution such that $GF(D)\subset G$.   
\qed
\end{proof}

\begin{proposition}\label{existence} 
When $D\subset \SPF(G)$ with $G$ a rigid grammar, 
the grammar $RG(D)$ exists and $RG(D)\sqsubset G$.
\end{proposition}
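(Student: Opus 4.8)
The plan is to derive this from the two immediately preceding propositions, \ref{a->b} and the existence of most general unifiers. First I would invoke proposition~\ref{a->b}: since $D\subset\SPF(G)$, we get $GF(D)\sqsubset G$, so there is a substitution $\tau$ with $\tau(GF(D))\subset G$. Now $G$ is rigid, so for every word $w$ in the lexicon of $G$ the set $\lex_G(w)$ is a singleton; hence $\tau$ maps all the categories that $GF(D)$ assigns to a given word $w$ to the \emph{same} category. In other words $\tau$ is a unifier of the set $\lex_{GF(D)}(w)$ for each word $w$, i.e.\ $\tau$ is a unifier of the grammar $GF(D)$ in the sense of section~\ref{uni}.

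Once we know a unifier of $GF(D)$ exists, the theory of most general unifiers recalled in section~\ref{uni} (and in \cite{Kan98}) guarantees that the most general unifier $\sigma_u$ of all the categories of each word exists; since $RG(D)$ is by definition $\sigma_u(GF(D))$, the grammar $RG(D)$ exists. This settles the first half of the statement.

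For the second half, I would use the defining property of the mgu: since $\tau$ is a unifier of $GF(D)$, there is a substitution $\rho$ with $\tau=\rho\circ\sigma_u$. Then $\rho(RG(D))=\rho(\sigma_u(GF(D)))=\tau(GF(D))\subset G$. Because $RG(D)$ is rigid, $\rho$ trivially does not identify different categories of a given word (there is only one), so by the very definition of the relation $\sqsubset$ this exhibits $RG(D)\sqsubset G$, as required.

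The only genuinely delicate point is making the step ``$G$ rigid $\Rightarrow$ $\tau$ unifies $GF(D)$'' airtight: one must check that when two occurrences $w=w^k_i$ and $w=w^{k'}_{i'}$ contribute categories $c^k_i$ and $c^{k'}_{i'}$ to $\lex_{GF(D)}(w)$, the substitution $\tau$ produced by proposition~\ref{a->b} really sends both to the unique element of $\lex_G(w)$ — this is immediate from $\tau(GF(D))\subset G$ together with $|\lex_G(w)|=1$, but it is the hinge of the argument. Everything else is a routine unwinding of the definitions of $\sqsubset$, $GF$, $RG$ and of the universal property of the mgu.
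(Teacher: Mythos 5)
Your proof is correct and follows essentially the same route as the paper's own argument: apply proposition~\ref{a->b} to get a substitution witnessing $GF(D)\sqsubset G$, use rigidity of $G$ to conclude that this substitution unifies the categories of each word so that the mgu and hence $RG(D)$ exist, and then factor the substitution through the mgu to obtain $RG(D)\sqsubset G$. The only difference is notational (your $\tau,\rho$ versus the paper's $\sigma,\tau$), plus your explicit remark on the rigidity step, which the paper states more tersely.
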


\begin{proof} 
By proposition~\ref{a->b} we have $GF(D)\sqsubset G$, 
so  there exists a substitution $\sigma$ such that $\sigma(GF(D))\subset G$.

As $G$ is rigid, $\sigma$ unifies all the categories of each word. 
Hence there exists a unifier of all the categories of each word, and $RG(D)$ exists. 

$RG(D)$ is defined as the application of most general unifier
$\sigma_u$ to $GF(D)$.  By the definition of a most general unifier\footnote{Unifiers and most general unifiers 
work as usual even though we unify \emph{sets} of categories, see section \ref{unification}.}, 
there exists a substitution $\tau$ such that $\sigma=\tau\circ\sigma_u$. 

Hence $\tau(RG(D))=\tau(\sigma_u(GF(D)))=\sigma(GF(D))\subset G$;
\newline thus $\tau(RG(D))\subset G$, hence $RG(D)\sqsubset G$. \qed
\end{proof}

\begin{proposition}\label{increasing} 
If $D\subset D'\subset \SPF(G)$ with $G$ a rigid grammar 
then $RG(D)\sqsubset RG(D')\sqsubset G$. 
\end{proposition}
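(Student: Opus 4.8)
The plan is to derive this from the monotonicity of $RG$ with respect to the set of examples, using the propositions already established. First I would note that since $D \subseteq D' \subseteq \SPF(G)$ with $G$ rigid, Proposition~\ref{existence} applies to both $D$ and $D'$: hence $RG(D)$ and $RG(D')$ both exist, and moreover $RG(D') \sqsubset G$ directly. So the only real content is the first inequality $RG(D) \sqsubset RG(D')$.

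For that, I would argue as follows. Since $D \subseteq D'$, the construction of $GF$ from a set of structured examples gives $GF(D) \subseteq GF(D')$ in the $\subset$ sense on lexica --- every word--category assignment collected from examples in $D$ is also collected from $D'$ (assuming, as in the algorithm's definition, that the most general labellings are chosen with pairwise disjoint category variables across the whole of $D'$, so the labellings used for $D$ are literally the restrictions of those used for $D'$). In particular $GF(D) \subseteq GF(D') \sqsubset RG(D')$, so by transitivity $GF(D) \sqsubset RG(D')$. Now $RG(D')$ is a rigid grammar, so Proposition~\ref{existence} --- applied with $G := RG(D')$, which is legitimate since $D \subseteq D' \subseteq \SPF(RG(D'))$ by Proposition~\ref{genall} --- tells us that $RG(D)$ exists and $RG(D) \sqsubset RG(D')$. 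Combined with $RG(D') \sqsubset G$ this yields $RG(D) \sqsubset RG(D') \sqsubset G$.

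The one point that needs a little care --- and which I expect to be the main obstacle to a fully rigorous write-up --- is the claim $GF(D) \subseteq GF(D')$ at the level of lexica rather than just languages. This is not automatic from $D \subseteq D'$ alone, because $GF$ is defined via a choice of most general labellings, and different choices of fresh variables would give grammars equal only up to renaming. The clean way to handle it is exactly the convention stated in the definition of the RG-like algorithm: fix once and for all a family of most general labellings for the elements of $D'$ using pairwise disjoint variables, and use the induced labellings for the sub-collection $D$. With that convention the inclusion of lexica is literal, and the argument above goes through verbatim; alternatively one observes that $GF(D) \sqsubset GF(D')$ holds up to renaming regardless of the choices, which is all that $\sqsubset$ needs. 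Everything else is a mechanical chaining of Propositions~\ref{substitution}, \ref{b->a}, \ref{genall} and \ref{existence}, with no new ideas required.
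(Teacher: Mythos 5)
Your proof is correct and follows essentially the same route as the paper's: the decisive step in both is that $D\subset D'\subset \SPF(RG(D'))$ by Proposition~\ref{genall}, after which Proposition~\ref{existence} applied with $G:=RG(D')$ (a rigid grammar) yields $RG(D)\sqsubset RG(D')$, while $RG(D')\sqsubset G$ comes from Proposition~\ref{existence} applied to $D'$. The preliminary discussion of $GF(D)\subset GF(D')$ and the labelling convention is superfluous --- that inclusion is never actually used once you invoke Proposition~\ref{existence}, which already packages the whole passage from $D\subset \SPF(RG(D'))$ to $RG(D)\sqsubset RG(D')$.
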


\begin{proof} 
Because of proposition~\ref{existence} both $RG(D)$ and $RG(D')$ exist. 
We have $D\subset D'$ and $D'\subset \SPF(RG(D'))$,
so $D\subset \SPF(RG(D'))$;
hence, by proposition~\ref{existence} applied to $D$ and $G=RG(D')$ (a rigid grammar) 
we have $RG(D)\sqsubset RG(D')$. \qed
\end{proof} 

\begin{theorem}
The algorithm RG for learning rigid Lambek grammars converges in the sense
of Gold. 
\end{theorem}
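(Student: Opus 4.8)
The plan is to follow the standard Kanazawa-style convergence argument, now carried out over $\sss$ proof frames rather than functor-argument structures, using exactly the propositions already established in Section~\ref{conv}. Let $\phi = RG(\cdot)$ be the learning function and fix a rigid target grammar $G$ together with an enumeration $e_1,e_2,\ldots$ of $\SPF(G)$; write $D_n = \{e_1,\ldots,e_n\}$ and $G_n = \phi(D_n) = RG(D_n)$. First I would check that each $G_n$ is well defined: since $D_n \subset \SPF(G)$ with $G$ rigid, Proposition~\ref{existence} gives that $RG(D_n)$ exists and $RG(D_n)\sqsubset G$, so unification never fails along an enumeration drawn from the class and $\phi$ is total on such inputs (this also settles the footnote~\ref{phipartial} worry about consistency).

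Next I would establish the stabilisation of the sequence $(G_n)$. The chain $D_1 \subset D_2 \subset \cdots \subset \SPF(G)$ gives, by Proposition~\ref{increasing}, an ascending $\sqsubset$-chain $RG(D_1) \sqsubset RG(D_2) \sqsubset \cdots \sqsubset G$. Now invoke Proposition~\ref{finite}: there are only finitely many grammars $H$ with $H \sqsubset G$ (up to renaming). A weakly increasing sequence in a finite poset is eventually constant, so there is an $N$ with $G_n = G_N$ for all $n \geq N$ — here one must note that $\sqsubset$ restricted to this finite set is a genuine partial order up to renaming (antisymmetry up to renaming holds because $\sigma(H)\subset H'$ and $\sigma'(H')\subset H$ force the category sizes to match, making both substitutions renamings), and that $G_n \sqsubset G_{n+1}$ together with the size/cardinality bound prevents infinite strict ascent. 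So $\phi$ changes its mind only finitely often and converges syntactically to $G_N$.

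It remains to show $\lang(G_N) = \lang(G)$, or rather the structural analogue $\SPF(G_N) = \SPF(G)$, from which equality of the generated string languages follows. One inclusion is Proposition~\ref{substitution} (or directly Proposition~\ref{genall}): since $G_N = RG(D_N) \sqsubset G$, we get $\SPF(G_N) \subset \SPF(G)$. For the reverse inclusion, take any $e \in \SPF(G) = \{e_i \mid i \in \nat\}$; then $e = e_m$ for some $m$, and for $n = \max(m,N)$ we have $e_m \in D_n \subset \SPF(RG(D_n)) = \SPF(G_n) = \SPF(G_N)$ by Proposition~\ref{genall} and convergence. Hence $\SPF(G) \subset \SPF(G_N)$, so the two proof-frame languages coincide, and since the string yield of a proof frame is determined by the frame, $\lang(G_N) = \lang(G)$. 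This matches Definition~\ref{goldcvg} and completes the argument.

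The main obstacle, and the only place needing genuine care, is the stabilisation step: one must be sure that $\sqsubset$ on the finite collection $\{H : H \sqsubset G\}$ behaves like a partial order (modulo renaming) so that a $\sqsubset$-increasing sequence cannot cycle or ascend forever. This reduces to the observation, implicit in the proof of Proposition~\ref{finite}, that substitutions never decrease grammar size, so a two-way $\sqsubset$ between grammars of bounded size forces renamings; everything else is a direct chaining of Propositions~\ref{finite}, \ref{substitution}, \ref{genall}, \ref{existence} and \ref{increasing}, exactly as in \cite{Kan98,Bon00}.
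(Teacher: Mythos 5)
Your proof is correct and follows essentially the same route as the paper's: it chains Propositions~\ref{existence} and~\ref{increasing} to get a $\sqsubset$-increasing sequence bounded by $G$, uses Proposition~\ref{finite} for stabilisation, and then establishes $\SPF(RG(D_N))=\SPF(G)$ via Propositions~\ref{genall} and~\ref{substitution} exactly as in the text. The only addition is your explicit justification that a $\sqsubset$-increasing sequence ranging over a finite set must become constant (antisymmetry of $\sqsubset$ up to renaming, via the size argument), a point the paper leaves implicit; this is a welcome refinement rather than a different approach.
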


\newcommand\ent{\omega}
\begin{proof}
Let  $(D_i)_{i\in\nat}$ be an increasing sequence of \emph{sets} of examples in
$\SPF(G)$ enumerating $\SPF(G)$, in other words  $\cup_{i\in\ent} D_i=\SPF(G)$: 

$$D_1 \subset D_2 \subset \cdots D_i \subset D_{i+1} \cdots \subset \SPF(G)$$ 

Because of proposition~\ref{existence} 
for every $i\in\ent$ the rigid grammar $RG(D_i)$ exists and because of
proposition~\ref{increasing} the rigid  grammars $RG(D_i)$ define a  $\sqsubset$-increasing sequence 
of grammars 
which by proposition~\ref{existence} 
is bounded by $G$: 

$$RG(D_1) \sqsubset RG(D_2) \sqsubset \cdots RG(D_i) \sqsubset RG(D_{i+1}) \cdots \sqsubset G$$

As they are only finitely many grammars  $H\sqsubset G$ (proposition~\ref{finite}) 
this sequence $RG(D_i)$ is stationary
after a certain rank:  there exists an integer $N$ such that for all $n\geq N$ $RG(D_n)=RG(D_N)$. 

Let us show that the langue generated by $RG(D_N)$ is the one to be learnt, 
i.e. let us prove that $\SPF(RG(D_N))=\SPF(G)$ by proving the two inclusions: 
\begin{enumerate} 
\item Firstly, let us prove that $ \SPF(RG(D_N))\supset \SPF(G)$
Let $\pi_f$ be an \SPF\ in $\SPF(G)$. 
Since $\cup_{i\in\ent} D_i=\SPF(G)$ there exists a $p$ such that $\pi_f\in \SPF(D_p)$. 
\begin{itemize} 
\item 
If $p<N$, because $D_p\subset D_N$, $\pi_f\in D_N$,
and by proposition~\ref{genall} $\pi_f\in \SPF(RG(D_N))$. 
\item 
If $p\geq N$, we have $RG(D_p)=RG(D_N)$ since the sequence of grammars is stationary after $N$. 
By proposition~\ref{genall} we have $D_p\subset \SPF(RG(D_p))$ 
hence $\pi_f\in \SPF(RG(D_N))=\SPF(RG(D_p))$. 
\end{itemize} 
In all cases, $\pi_f\in \SPF(RG(D_N))$. 
\item Let us finally prove that $\SPF(RG(D_N))\subset \SPF(G)$: 
Since $RG(D_N)\sqsubset G$, by proposition~\ref{substitution} we have 
$\SPF(RG(D_N))\subset \SPF(G)$  \qed
\end{enumerate} 
\end{proof}

This precisely shows that the algorithm proposed in section 
\ref{RG} 
converges in the sense of Gold's definition (\ref{goldcvg}).

\section{Learning product free Lambek grammars from  natural deduction frames}  
\label{nd} 

The reader may well find that the structure of the positive examples that we  learn from,  sorts of proofnets 
are too sophisticated structures to learn from.  He could think that our learning process is a drastic simplification of the similar algorithms that use functor argument structures, i.e. name free natural deductions. 

Let us first see that normal natural deductions are quite a sensible structure to learn Lambek grammars from. 
Tiede \cite{Tie99phd} observed that natural deductions  in the Lambek calculus (be they normal or not) are plain trees, defined by two unary operators ($\lto$ and $\lfrom$ introduction rules) and two binary operators ($\lto$ and $\lfrom$ elimination rules), from formulae as leaves (hypotheses, cancelled or free). As opposed to the intuitionistic case, there  is no need to specify which hypothesis is cancelled by the introduction rules, as they may be inferred inductively: a $\lto$ (respectively $\lfrom$) introduction rule cancels the left-most (respectively right-most) free hypothesis. 
He also observed that \emph{normal} natural deductions should be considered as the proper parse structures, since otherwise any possible syntactic structure (a binary tree) is possible. 
Therefore is is natural to learn Lambek grammars from normal natural deduction frames --- natural deductions from which category names have been erased but the final $\sss$. Indeed, $\sss$ natural deduction frames are to Lambek categorial grammars what the functor-argument (FA) structures are to AB categorial grammars --- these FA structures are the standard structures used for learning AB grammars by Buskowski, Penn and Kanazawa \cite{BP90,Kan98}.

The purpose of this section is to exhibit a one to one correspondence between cut-free proof nets of the product free Lambek calculus and normal natural deductions, thus justifying the use of proof frames for learning Lambek grammars. When there is no product, proof frames are the same as natural deduction frames that we initially used in \cite{BR01lll}. They generalise the standard FA structures, and when the product is used, natural deduction become quite tricky \cite{AR07cie,Amblard07phd} and there are the only structures one can think about.

The correspondence between on one hand natural deduction or the isomorphic $\lambda$-terms and on the other hand, proof nets, can be traced back to \cite{Ret87} (for second order lambda calculus) but the 
the closest result is the one for linear $\lambda$-calculus \cite{GR96}.

\subsection{Proofnets and natural deduction: climbing principal branches} 
\label{pn&nd}

As said in section \ref{categorial}, the  formulae of product free Lambek calculus are defined
 by:  
$$\calC \backus  \sss \naur  B \naur  \calC\lto \calC \naur  \calC\lfrom \calC$$
Hence their linear counterpart are a strict subset of the polarised linear formulae of subsection \ref{formulae}: 
$$\left\{
\begin{array}{lcccccccc}
\FOH & \backus  &\calP & \naur   & (\FIH\pa\FOH) & \naur  & (\FOH\pa \FIH)\\
\FIH & \backus  & \calP^\perp & \naur  & (\FOH\llts\FIH) & \naur  & 
(\FIH\llts \FOH) \\
\end{array}\right.
$$ 

Let us call these formulae the \emph{heterogeneous}  polarised formulae, which are either positive or negative formulae.
In these heterogeneous formulae 
the connectives $\pa$ and $\otimes$ may only apply to a pair formulae with opposite polarity. 
The translation from Lambek categories to linear formulae and vice versa from subsection \ref{formulae} apply to them as well.

One may  think that a proof net corresponds to a sequent calculus proof which itself  corresponds to a natural deduction: as shown in our book \cite{MootRetore2012lcg}, this is correct, as far as one does not care about \emph{cuts} --- which are problematic in non commutative calculi, see e.g.\cite{Mellies2004ribbon}.  
As it is well known in the case of intuitionnistic logic, cut-free and normal are different notions \cite{Zuc74}, and proof net are closer to sequent calculus in some respects. If one translate inductively, rule by rule, a natural deduction into a sequent calculus or into a proof net, the  elimination rule from $A$ and $A\lto B$ yields a cut on the $A\lto B$ formula, written $A^\perp\pa B$ in linear logic. We shall see how this can be avoided.

\begin{figure}
\begin{center}
\begin{prooftree} 
	\[ \begin{array}{c}	
	\mbox{\footnotesize\it  this rule requires at least two free hyp.} \\ 
	\\ 
	   A\ \ \mbox{leftmost free hyp.}\\ 
	   \ldots [A]\ldots \ldots 
	   \end{array} 
	\leadsto 
	B 
	\using 
	\]
	\justifies 
	A\lto B
	\using \mbox{$\lto_{i}$\ binding $A$} 
\end{prooftree} 
\hspace{2cm}
\begin{prooftree} 
	\[\Delta 
	\leadsto 
	A
	\] 
	\[\Gamma
	\leadsto 
	A\lto B 
	\] 
	\justifies 
	B
	\using\lto_{e}
\end{prooftree} 
\end{center}
\begin{center}
\begin{prooftree} 
	\[ \begin{array}{c}
	\mbox{\footnotesize\it this rule requires at least two free hyp.} \\ 
	\\ 
	   A\ \mbox{rightmost free hyp.}\\ 
	   \ldots \ldots [A] \ldots 
	   \end{array} 
	\leadsto 
	B 
	\using 
	\]
	\justifies 
	B\lfrom A
	\using \mbox{$\lfrom_{i}$\ binding $A$} 
\end{prooftree} 
\hspace{2cm} 
\begin{prooftree} 
	\[\Gamma
	\leadsto 
	B\lfrom A 
	\] 
	\[\Delta 
	\leadsto 
	A
	\] 
	\justifies 
	B
	\using\lfrom_{e}
\end{prooftree} 
\end{center} 
\caption{Natural deduction rule for product free Lambek calculus}
\label{natded} 
\end{figure}

\subsubsection{From normal natural deductions to cut-free proof nets} 

Let us briefly  recall some basic facts on natural deduction for the product free Lambek calculus, from our book   \cite[section 2.6 pages 33-39]{MootRetore2012lcg}. 
In particular we shall need the following notation. Given a formula $C$, and a sequence of length $p$ of pairs
consisting of a letter $\varepsilon_i$ (where $\varepsilon_i \in \{ l, r\}$) and a formula $G_i$ 
we denote by $$C[(\varepsilon_1,G_1),\ldots,(\varepsilon_p,G_p)]$$ the  formula defined as follows:  
\begin{description}
\item[if $p=0$] $C[]=C$ 
\item[if $\varepsilon_p=l$]
  $C[(\varepsilon_1,G_{1}),\ldots,(\varepsilon_{p-1},G_{p-1}),(\varepsilon_p,G_p)]=\newline   
  G_p\lto C[(\varepsilon_{1},G_{1}),\ldots,(\varepsilon_{p-1},G_{p-1})]$
\item[if $\varepsilon_p=r$]  $C[(\varepsilon_1,G_1),\ldots ,(\varepsilon_{p-1},G_{p-1}),(\varepsilon_p,G_p)]=
\newline C[(\varepsilon_{1},G_{1}),\ldots,(\varepsilon_{p-1},G_{p-1})]\lfrom G_p$ 
\end{description}

An important property of normal natural deductions is that whenever the last rule is an elimination rule, there is a principal branch 
leading from the conclusion to a free hypothesis \cite[proposition 2.10 page 35]{MootRetore2012lcg} 
When a rule $\lto_e$ (resp. $\lfrom_e$) is applied between a right premise $A\lto X$ (resp. a left premise 
$X\lfrom A$) and a formula $A$ as its left (resp. right) premise, the premise  $A\lto X$ (resp. a left premise 
$X\lfrom A$) is  said to be the \textit{principal} premise. In a proof ending with an elimination rule, a \emph{principal branch} is a path 
from the root $C=X_0$ to a leaf $C[(\varepsilon_1,G_1),\ldots,(\varepsilon_p,G_p)]=X_p$ such that one has 
$X_i=C[(\varepsilon_1,G_1),\ldots,(\varepsilon_i,G_i)]$ and  also $X_{i+1}=C[(\varepsilon_1,G_1),\ldots,(\varepsilon_{i+1},G_{i+1})]$
and $X_i$ is the conclusion of an  elimination rule, $\lto_e$ if $\varepsilon_{i+1}=l$  and $\lfrom_e$ 
if $\varepsilon_{i+1}=r$, 
with principal premise $X_{i+1}$ and $G_{i+1}$ as the other premise.

 Let $d$ be a normal natural deduction with conclusion $C$ and hypotheses $H_1,\ldots,H_n$. 
The deduction $d$ is inductively turned into a cut-free proof net with conclusions $H_n-,\ldots,H_1-,C+$ as follows (we only consider $\lto$ because $\lfrom$ is symmetrical). 
 \begin{itemize} 
 \item 
 If $d$ is just an hypothesis $A$ which is at the same time its conclusion the corresponding proof net is the axiom $A,A^\perp$. 
 \item 
 If $d$ ends with a $\lto$ intro, from $A,H_1,\ldots,H_n\seq B$ to $H_1,\ldots,H_n\seq A\lto B$, by induction hypothesis we have a proof net with conclusions $(H_n)-,\ldots,(H_1)-,A-,B+$. 
 The heterogeneous $\pa$ rule applies since $B+$ is heterogeneous positive and $A-$ heterogeneous negative. 
 A $\pa$ rule yields a proof net with conclusions $(H_n)-,\ldots,(H_1)-,A-\pa B+$, and $A-\pa B+$ is precisely $(A\lto B)+$
 \item 
 The only interesting case is when $d$ ends with an elimination rule, say $\lto_e$.
 In this case there is a principal branch, say with hypothesis $C[(\varepsilon_1,G_1),\ldots,(\varepsilon_p,G_p)]$ which is applied to $G_i$'s. Let us call $\Gamma_i=H_i^1,\ldots,H_i^{k_{i}}$ the  hypotheses of $G_i$, and let $d_i$ be the proof of $G_i$ from $\Gamma_i$. 
 By induction hypothesis we have a proof net $\pi_i$ with conclusions $(\Gamma_i)-,(G_i)+$.
Let us define the proof net $\pi^k$  of conclusion $C^k-=C[(\varepsilon_1,G_1),\ldots,(\varepsilon_k,G_k)]-$, 
$\Gamma_i$ for $i\leq k$ and $C+$ by:
\begin{itemize}
\item 
 if $k=0$ then it is an axiom $C^\perp, C$  (consistent with the translation of an axiom) 
 \item 
 otherwise $\pi^{k+1}$ is obtained  by a times rule between the conclusions 
  $C^k-$ of 
 $\pi^k$ and $G_{k+1}+$ of $\pi_{k+1}$ 
When $\varepsilon_i=r$ then the conclusion chose the conclusion of this link to  
$G_{k+1}+\otimes C^{k}-$ that is $C^k-\lfrom G_{k+1}+=C^{k+1}-$ and when 
 $\varepsilon_i=l$ the conclusion is $C^{k}-\otimes G_{k+1}+$ that is $G_{k+1}+\lto C^{k}-=C^{k+1}-$.
 hence, in any case the conclusions of $\pi^{k+1}$ are $C^{k+1}+$ $C+$ and the $\Gamma_i$ for $i\leq k+1$. 
\end{itemize} 
The translation of $d$ is simply $\pi^p$, which has the proper conclusions. 
 \end{itemize} 
 
 As the translation does not introduce any cut-rule, the result is a cut-free proof net. 
 
 \subsubsection{From cut-free proof nets to normal natural deductions} 
 
 There is an algorithm that performs the reverse translation, presented for multiplicative linear logic  and linear lambda terms in \cite{GR96}. It strongly relies on the correctness criterion, 
which makes sure that everything happens as indicated during the algorithm and that it terminates. 
This algorithm always points at a formula in the proof net, and draws paths in the proof net.  Going \emph{up} means going to an immediate sub formula, and going \emph{down} means considering the immediate super formula.  The algorithms label the proof net nodes with Lambek lambda terms that are natural deductions written as terms, and the natural deduction that translates the proof net is the Lambek lambda term labelling the output of the proof net. 
 
 \begin{enumerate}
 \item \label{init}  Enter the proof net by its unique output conclusion. 
 \item \label{up}
Go up until you reach an axiom. Because of the polarities, during this upwards path, you only meet $\pa$-links, which  correspond to the introduction rules  $\lambda_r x_i^{T_i}$ or $\lambda_l x_i^{T_i}$, the $T_i$s being the  input formulae (the hypotheses that are cancelled).  
Such formulae are labelled with distinct variables $x_i$. 
 \item \label{down}
 Use the axiom link and go down with the input polarity. Hence you only meet $\otimes$ links (*) until you reach a conclusion or a $\pa$ link. In both cases, this formula is the type of the head-variable of the normal Lambek $\lambda$-term. If it is the premise of a 
 $\pa$-link, then it is necessarily a $\pa$ link on the path of step \ref{up} (because of the correctness criterion). In this case, the head variable (the hypothesis of the principal branch) 
is  bound by the corresponding $\lambda_r$ or $\lambda_l$ of the previous step \ref{up}. 
  Otherwise it  the head variable   is free.  
 \item \label{iteration}
 The output formulae that were left unlabelled when going down are the output premises of the 
 $\otimes$ links (*) that we met at step \ref{down}. To label them, one  goes up from theses output formulae, applying again  step \ref{up}. 
 \end{enumerate} 
 
 The  $\lambda$-term that labels the output conclusion is normal: only variables are applied to some arguments during the translation. It is easily read as a normal natural deduction. 

\subsection{Learning product free Lambek grammars from natural deduction} 

We have defined a bijective correspondence between \emph{cut free}  product free  proof nets and \emph{normal} product free  natural deduction.  Therefore we also have a correspondence between $\sss$ proof frames and name free natural deduction  whose conclusion is $\sss$. 

Hence, if one wishes to,  it is possible to learn product free Lambek grammars from natural deduction without names but the final $\sss$, as we did in \cite{BR01lll}. 
 Such structures are simply the generalisation to Lambek calculus of the FA structures that are commonly used for basic categorial grammars  by \cite{BP90,Kan98}. 

\section{Conclusion and possible extensions} 
\label{concl}

A criticism that can be addressed to our learning algorithm  is that the rigidity condition on Lambek grammars is too restrictive. 
One can say, as in \cite{Kan98} that $k$-valued grammars can be learned by doing all the possible unifications that lead to less than $k$ categories. Every successful unifications yielding a grammar with less than $k$ categories  should be kept, because in a later step it is quite possible that one works while the others do not: hence this approach is computationally intractable. An alternative is to use 
a precise part-of-speech tagger and to consider one word with different categories as several distinct words. 
This looks more accurate and has been carried out effectively,  with the help of some statistical techniques. \cite{MS2012lacl,moot10semi}

The principal weakness of identification in the limit is that too much structure is required on the input examples.  
Ideally, one would like to learn directly from strings, but in the case of Lambek grammars it has been shown to be impossible in \cite{FL02coling}.  One may think that it could be possible 
to try every possible structure on sentences as strings of words as done in  \cite{Kan98} for basic categorial grammars. 
Unfortunately, in the case of Lambek grammars, with or without product, this cannot be done. Indeed,  there can be infinitely many structures corresponding to a sentence, because a cancelled hypothesis does not have to be anchored in one the finitely many words of the sentence. Hence we ought to learn from structured sentences, as we did. 

From the point of view of first language acquisition we know that some structure is available, 
but it is unlikely that the structured sentences are  the proof frames of the present article.  The real structure available to the learner includes prosodic and semantic informations, and no one knows how to formalise these structures in order 
to simulate the natural data used during the actual language learning process. 
From a computational linguistic perspective, our result is not as restrictive as it may seem. 
Indeed, there exist tools that annotate corpora, and one may implement other tools that turn standard annotations into the annotations we need. These shallow and efficient processes may lead to structures from which one can infer the proper structure for an algorithm like the one we presented in this paper. In the case of proof nets or frames, as observed long ago,  axioms express the consumption of the valencies. This  is the reason why, apart from the structure of the formulae, the structure of the proof frames is not so different from dependency annotations and such annotations can be used to infer categorial structures 
as done by Moot and Sandillon-Rezer  \cite{MS2012lacl,moot10semi}. However,  the automatic acquisition of wide-coverage grammars for natural language processing applications, certainly requires a combination of machine learning techniques and of  identification in the limit \`a la Gold, although up to now there are not so many such works.  

Grammatical formalisms that can be represented in Lambek grammars can also be learnt like we did in this paper. 
For instance, a categorial version of Stabler's minimalist grammars \cite{Sta96} can be learnt that way as the attempts by Fulop or by us show \cite{fulop2004logic,BR01lll}
This should be even better with the so-called Categorial Minimalist grammars of Lecomte, Amblard and us 
\cite{Amblard07phd,ALR2010Lambek}

\bibliographystyle{splncssrt}
\bibliography{bigbiblio} 

\end{document}